\definecolor{ashgrey}{rgb}{0.7, 0.75, 0.71}
\newtheorem{theorem}{Theorem}
\newtheorem{lemma}[theorem]{Lemma}
\DeclareMathOperator*{\argmax}{arg\,max}
\newcommand{\adaBracket}[1]{\left(#1 \right)}
\newcommand{\mynorm}[2]{\left|\left| #1 \right|\right|_{#2}}
\journal{Neural Networks}
\begin{document}

\begin{frontmatter}

\title{Cautious Policy Programming: Exploiting KL Regularization \\ for Monotonic Policy Improvement in Reinforcement Learning}

\author{Lingwei Zhu\corref{correspondingauthor}}
\ead{lingwei.andrew.zhu@gmail.com}
\author{Toshinori Kitamura, Takamitsu Matsubara}
\address{Graduate School of Science and Technology, Nara Institute of Science and Technology, Ikoma, Nara, Japan}
\cortext[correspondingauthor]{Corresponding author}





\begin{abstract}
In this paper, we propose cautious policy programming (CPP), a novel value-based reinforcement learning (RL) algorithm that can ensure monotonic policy improvement during learning. Based on the nature of entropy-regularized RL, we derive a new entropy-regularization-aware lower bound of policy improvement that depends on the expected policy advantage function but not on state-action-space-wise maximization as in prior work. 
CPP leverages this lower bound as a criterion for adjusting the degree of a policy update for alleviating policy oscillation. 
Different from similar algorithms that are mostly theory-oriented, we also propose a novel interpolation scheme that makes CPP better scale in high dimensional control problems.
We demonstrate that the proposed algorithm can trade off performance and stability in both didactic classic control problems and challenging high-dimensional Atari games.
\end{abstract}

\begin{keyword}
Reinforcement Learning \sep Monotonic Improvement \sep Entropy Regularization 
\end{keyword}

\end{frontmatter}


\section{Introduction}

Reinforcement learning (RL) has recently achieved impressive successes in fields such as robotic manipulation~\cite{openai2019solving} and video game playing~\cite{mnih2015human}. However, compared with supervised learning that has a wide range of practical applications, RL applications have primarily been limited to game playing or lab robotics. A crucial reason for such limitation is the lack of guarantee that the performance of RL policies will improve monotonically; they often oscillate during policy updates. As such, deploying such updated policies without examining their reliability might bring severe consequences in real-world scenarios, e.g., crashing a self-driving car.

Dynamic programming (DP)~\cite{Bertsekas2005} offers a well-studied framework under which strict policy improvement is possible: with a known state transition model, reward function, and exact computation, monotonic improvement is ensured and convergence is guaranteed within a finite number of iterations~\cite{Ye2011}. However, in practice an accurate model of the environment is rarely available. In situations where either model knowledge is absent or the DP value functions cannot be explicitly computed, approximate DP and corresponding RL methods are to be considered. However, approximation introduces unavoidable update and Monte-Carlo sampling errors, and possibly restricts the policy space in which the policy is updated, leading to the \emph{policy oscillation} phenomenon~\cite{Bertsekas2011,wagner2011}, whereby the updated policy performs worse than pre-update policies during intermediate stages of learning. Inferior updated policies resulting from policy oscillation could pose a physical threat to real-world RL applications. Further, as value-based methods are widely employed in the state-of-the-art RL algorithms~\cite{haarnoja-SAC2018}, addressing the problem of policy oscillation becomes important in its own right.


Previous studies~\cite{Kakade02,pirotta13} attempt to address this issue by optimizing lower bounds of policy improvement: the classic conservative policy iteration (CPI) \cite{Kakade02} algorithm states that, if the new policy is linearly interpolated by the greedy policy and the baseline policy, non-negative lower bound on the policy improvement can be defined. 
Since this lower bound is a negative quadratic function in the interpolation coefficient, one can solve for the maximizing coefficient to obtain maximum improvement at every update.
CPI opened the door of monotonic improvement algorithms and the concept of linear interpolation can be regarded as performing regularization in the \emph{stochastic policy space} to reduce greediness.
Such regularization is theoretically sound as it has been proved to converge to global optimum \cite{Scherrer2014-localPolicySearch,Neu17-unified}.
For the last two decades, CPI has inspired many studies on \emph{ensuring monotonic policy improvement}.
However, those studies (including CPI itself) are mostly theory-oriented and hardly applicable to practical scenarios, in that maximizing the lowerbound requires solving several state-action-space-wise maximization problems, e.g. estimating the maximum distance between two arbitrary policies.
One significant factor causing the complexity might be its excessive generality~\cite{Kakade02,pirotta13}; 
these bounds do not focus on any particular class of value-based RL algorithms, and hence without further assumptions the problem cannot be simplified.

Another recent trend of developing algorithms robust to the oscillation is by \emph{introducing regularizers into the reward function}.
For example, by maximizing reward as well as Shannon entropy of policy \cite{ziebart2010-phd}, the optimal policy becomes a multi-modal Boltzmann softmax distribution which avoids putting unit probability mass on the greedy but potentially sub-optimal actions corrupted by noise or error, significantly enhancing the robustness since optimal actions always have nonzero probabilities of being chosen. 
On the other hand, the introduction of Kullback-Leibler (KL) divergence \cite{todorov2006linearly} has recently been identified to yield policies that average over all past value functions and errors, which enjoys state-of-the-art error dependency theoretically \cite{vieillard2020leverage}. 
Though entropy-regularized algorithms have superior finite-time bounds and enjoy strong empirical performance, they do not guarantee to reduce policy oscillation since degradation during learning can still persist \cite{Nachum2017-TrustPCL}.

It is hence natural to raise the question of whether the practically intractable lowerbounds from the monotonic improvement literature can benefit from entropy regularization if we restrict ourselves to the entropy-regularized policy class. 
By noticing that the policy interpolation and entropy regularization actually perform regularization in different aspects, i.e. in the stochastic policy space and reward function, we answer this question by affirmative. 
We show focusing on the class of entropy-regularizede policies significantly simplifies the problem as a very recent result indicates a sequence of entropy-regularized policies has bounded KL divergence \cite{kozunoCVI}. 
This result sheds light on approximating the intractable lowerbounds from the monotonic improvement algorithms since many quantities are related to the maximum distance between two arbitrary policies.


In this paper, we aim to tackle the policy oscillation problem by ensuring monotonic improvement via optimizing a more tractable lowerbound.
This novel entropy regularization aware lower bound of policy improvement depends only the expected policy advantage function.
We call the resultant algorithm \emph{cautious policy programming (CPP)}.
CPP leverages this lower bo\-und as a criterion for adjusting the degree of a policy update for alleviating policy oscillation. 
By introducing heuristic designs suitable for nonlinear approximators, CPP can be extended to working with deep networks. 
The extensions are compared with the state-of-the-art algorithm \cite{Vieillard-2020DCPI} on monotonic policy improvement. 
We demonstrate that our approach can trade off performance and stability in both didactic classic control problems and challenging Atari games.

The contribution of this paper can be succinctly summarized as follows:
\begin{itemize}
  \item we develop an easy-to-use lowerbound for ensuring monotonic policy improvement in RL.
  \item we propose a novel scalable algorithm CPP which optimizes the lowerbound. 
  \item CPP is validated to reduce policy oscillation on high-dimensional problems which are intractable for prior methods.
\end{itemize}
Here, the first and second points are presented in Sec. \ref{sec:proposedMethod}, after a brief review on related work in Sec. \ref{sec:relatedWork} and preliminary in Sec. \ref{sec:preliminary}. 
The third point is inspected in  Sec. \ref{sec:experimental} which presents the results.
CPP has touched upon many related problems, and we provide in-depth discussion in Sec. \ref{sec:discussion}. 
The paper is concluded in Sec. \ref{sec:conclusion}.
To not interrupt the flow of the paper, we defer all proofs until the Appendix.

\section{Related Work}\label{sec:relatedWork}

The policy oscillation phenomenon, also termed \emph{overshooting} by~\cite{wagner2011} and referred to as degraded performance of updated policies, frequently arises in approximate policy iteration algorithms~\cite{Bertsekas2011} and can occur even under asymptotically converged value functions~\cite{wagner2011}. It has been shown that aggressive updates with sampling and update errors, together with restricted policy spaces, are the main reasons for policy oscillation~\cite{pirotta13}. 
In modern applications of RL, policy oscillation becomes an important issue when learning with deep networks when various sources of errors have to been taken in to account. It has been investigated by \cite{Fujimoto18-addressingApproximationError,fu2019-diagnosis} that those errors are the main cause for typical oscillating performance with deep RL implementations.

To attenuate policy oscillation, the seminal algorithm conservative policy iteration (CPI) \cite{Kakade02} propose to perform regularization in the stochastic policy space, whereby the greedily updated policy is interpolated with the current policy to achieve less aggressive updates. 
{\color{black}
CPI has inspired numerous conservative algorithms that enjoy strong theoretical guarantees \cite{pirotta13,Pirotta13-adaptiveStepSizePG,abbasi-improvement16,Metelli18-configurable} to improve upon CPI by proposing new lower bounds for policy improvement. However, since their focus is on general Markov decision processes (MDPs), deriving practical algorithms based on the lower bounds is nontrivial and the proposed lower bounds are mostly of theoretical value. 
Indeed, as admitted by the authors of \cite{papini20-balanceSpeedStabilityPG} that a large gap between theory and practice exists, as manifested by the their experimental results that even for a simple Cartpole environment, state-of-the-art algorithm failed to deliver attenuated oscillation and convergence speed comparable with heuristic optimization scheme such as Adam \cite{Adam}.
This might explain why \emph{adaptive coefficients} must be introduced in~\cite{Vieillard-2020DCPI} to extend CPI to be compatible with deep neural networks. To remove this limitation, our focus on entropy-regularized MDPs allows for a straightforward algorithm based on a novel, significantly simplified lower bound. 
}

Another line of research toward alleviating policy oscillation is to incorporate regularization as a penalty into the reward function, leading to the recently booming literature on entropy-regularized MDPs \cite{azar2012dynamic,Fox2016,kozunoCVI,haarnoja-SAC2017a,vieillard2020leverage,Mei2019-principledEntropySearch}. 
Instead of interpolating greedy policies, the reward is augmented with entropy of the policy, such as Shannon entropy for more diverse behavior and smooth optimization landscape \cite{ahmed19-entropy-policyOptimization},
 or Kullback-Leibler (KL) divergence for enforcing policy similarity between policy updates and hence achieving superior sample efficiency \cite{Uchibe2018,uchibe2021-forwardBackward}. 
The Shannon entropy renders the optimal policy of the regularized MDP stochastic and multi-modal and hence robust against errors and noises in contrast to the deterministic policy that puts all probability mass on a single action \cite{haarnoja-SAC2018}.
On the other hand, augmenting with KL divergence shapes the optimal policy an average of all past value functions, which is significantly more robust than a single point estimate.
Compared to the CPI-based algorithms, entropy-regularized algorithms do not have guarantee on per-update improvement. But they have demonstrated state-of-the-art empirical successes on a wide range of challenging tasks \cite{cui2017kernel,TSURUMINE2019,ZHU2020CEP,ZHU2022CCE}.
To the best of the authors' knowledge, unifying those two regularization schemes has not been considered in published literature before.

{\color{black}
It is worth noting that, inspired by \cite{Kakade02}, the concept of monotonic improvement has been exploited also in policy search scenarios~\cite{trpo-schulman15,akrour-monotonic-2016,Lior19-adaptiveTRPO,mei20b-globalConvergence,papini20-balanceSpeedStabilityPG}. 
However, there is a large gap between theory and practice in those policy gradient methods. 
On one hand, though \cite{trpo-schulman15,schulman2017proximal} demonstrated good empirical performance, their relaxed trust region is often too optimistic and easily corrupted by noises and errors that arise frequently in the deep RL setting: as pointed out by \cite{engstrom2020implementation}, the trust region technique itself alone fails to explain the efficiency of the algorithms and lots  of code-level  tricks are necessary.
On the other hand,  exactly following the guidance of monotonic improving gradient does not lead to tempered oscillation and better performance even for simple problems \cite{Papini2017-adaptiveBatchSizePG,papini20-balanceSpeedStabilityPG}.
Another  shortcoming of policy gradient methods is they focus on local optimal policy with strong dependency on initial parameters. 
On the other hand, we focus on value-based RL that searches for global optimal policies.
}

\section{Preliminary}\label{sec:preliminary}


\subsection{Reinforcement Learning}

RL problems can be formulated by MDPs expressed by the quintuple $(\mathcal{S},\mathcal{A},\mathcal{T},\mathcal{R},\gamma)$, where $\mathcal{S}$ denotes the state space, $\mathcal{A}$ denotes the finite action space, and $\mathcal{T}$ denotes transition dynamics such that $\mathcal{T}_{ss'}^{a}:=\mathcal{T}(s'|s,a)$ represents the transition from state $s$ to $s'$ with action $a$ taken. $\mathcal{R} = r^{\,\,a}_{ss'}$ is the immediate reward associated with that transition. In this paper, we consider $r^{\,\,a}_{ss'}$ as bounded in the interval $[-1, 1]$. $\gamma \in (0,1)$ is the discount factor. For simplicity, we consider the infinite horizon discounted setting with a fixed starting state $s_{0}$. A policy $\pi$ is a probability distribution over actions given some state. We also define the stationary state distribution induced by $\pi$ as $d^{\pi}(s) = (1-\gamma)\sum_{t=0}^{\infty}\gamma^{t}\mathcal{T}({s_{t} = s|s_{0}, \pi})$. 

RL algorithms search for an optimal policy $\pi^{*}$ that maximizes the state value function for all states $s$:
\begin{align*}
  \pi^{*} := \argmax_{\pi} V^{\pi}(s) = \argmax_{\pi} \mathbb{E}\left[\sum_{t=0}^{\infty}\gamma^{t} r_{t} \big| s_{0} = s \right],
\end{align*}
where the expectation is with respect to the transition dynamics $\mathcal{T}$ and policy $\pi$. The state-action value function $Q^{\pi}$ is more frequently used in the control context:
\begin{align*}
  Q^{\pi^{*}}(s, a) = \max_{\pi} \mathbb{E}\left[\sum_{t=0}^{\infty}\gamma^{t} r_{t} \big| s_{0} = s, a_{0} = a \right].
\end{align*}

\subsection{Lower Bounds on Policy Improvement}

To frame the monotonic improvement problem, we introduce the following lemma that formally defines the criterion of policy improvement of some policy $\pi'$ over $\pi$:

\begin{lemma}{\cite{Kakade02}}\label{thm:kakade}
  {For any stationary policies $\pi'$ and $\pi$, the following equation holds}:
  \begin{align}
    \begin{split}
      &\Delta J^{\pi'}_{\pi, d^{\pi'}} := J^{{\pi'}}_{d} - J^{{\pi}}_{d} =  {\sum_{s}{d^{\pi'}(s)\sum_{a}{\pi'(a|s){A_{\pi}(s,a)}}}}, \\
      &\text{where } J^{{\pi'}}_{d} := \mathbb{E}_{s_{0},a_{0},\dots}{\bigg[ (1 - \gamma) \sum_{t=0}^{\infty}\gamma^{t}r_{t}\bigg]} = \sum_{s}{d^{{\pi'}}(s)}\sum_{a}{{\pi'}(a|s)r^{a}_{ss'}},
    \end{split}
    \label{kakade_idty}
  \end{align}
\end{lemma}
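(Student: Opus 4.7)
The plan is to prove this via the classical telescoping (performance difference) identity. First I would rewrite the objective in terms of value functions: since the starting state $s_0$ is fixed, $J^{\pi}_{d} = (1-\gamma) V^{\pi}(s_0)$ and similarly for $\pi'$, so the task reduces to expressing $(1-\gamma)[V^{\pi'}(s_0) - V^{\pi}(s_0)]$ in terms of the advantage $A_{\pi}(s,a) = Q_{\pi}(s,a) - V_{\pi}(s)$ evaluated along trajectories generated by $\pi'$.

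The key algebraic trick is to add and subtract the full discounted sum $\sum_{t=0}^{\infty} \gamma^{t} V_{\pi}(s_t)$ inside an expectation taken with respect to $\pi'$. Writing $V_{\pi}(s_0)$ as the $t=0$ term of that telescoping series and shifting the index by one on the subtracted copy gives
\begin{align*}
V^{\pi'}(s_0) - V^{\pi}(s_0) &= \mathbb{E}_{\pi'}\!\left[\sum_{t=0}^{\infty}\gamma^{t} r_t - V_{\pi}(s_0)\right] \\
&= \mathbb{E}_{\pi'}\!\left[\sum_{t=0}^{\infty}\gamma^{t}\bigl(r_t + \gamma V_{\pi}(s_{t+1}) - V_{\pi}(s_t)\bigr)\right].
\end{align*}
Taking the inner conditional expectation with respect to the transition kernel and $\pi'$ at step $t$ collapses $r_t + \gamma V_{\pi}(s_{t+1})$ to $Q_{\pi}(s_t, a_t)$, so each summand becomes $Q_{\pi}(s_t,a_t) - V_{\pi}(s_t) = A_{\pi}(s_t,a_t)$.

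Finally I would swap sum and expectation (justified since rewards and hence $V_\pi$ are bounded by $1/(1-\gamma)$ and $\gamma \in (0,1)$), and group terms by state–action occupancy. Invoking the definition $d^{\pi'}(s) = (1-\gamma)\sum_{t=0}^{\infty}\gamma^{t}\mathcal{T}(s_t = s\mid s_0,\pi')$ converts $(1-\gamma)\sum_t \gamma^t \mathbb{E}_{\pi'}[A_{\pi}(s_t,a_t)]$ into $\sum_s d^{\pi'}(s)\sum_a \pi'(a|s) A_{\pi}(s,a)$, which is exactly the claimed identity.

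There is no real obstacle; the only subtlety is recognizing the telescoping rearrangement and being careful with the $(1-\gamma)$ normalization convention that appears in both $J^{\pi}_d$ and in the definition of $d^{\pi'}$, so that the two factors cancel cleanly. Boundedness of the reward and $\gamma<1$ ensure absolute convergence, licensing the interchange of summation and expectation throughout.
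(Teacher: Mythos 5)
Your proof is correct and is exactly the standard telescoping argument for the performance difference lemma; the paper does not reproduce a proof of Lemma~\ref{thm:kakade} but simply cites \cite{Kakade02}, whose derivation is the same add-and-subtract of $\sum_t \gamma^t V_\pi(s_t)$ followed by collapsing $r_t + \gamma V_\pi(s_{t+1})$ to $Q_\pi(s_t,a_t)$ under the conditional expectation. Your handling of the $(1-\gamma)$ normalization against the definition of $d^{\pi'}$ and the boundedness justification for interchanging sum and expectation are both sound.
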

$J$ is the discounted cumulative reward, and $A_{\pi}(s,a) := \quad$ $ Q_{\pi}(s,a) - V_{\pi}(s)$ is the advantage function. Though Lemma \ref{thm:kakade} relates policy improvement to the expected advantage function, pursuing policy improvement by directly exploiting Lemma \ref{thm:kakade} is intractable as it requires comparing $\pi'$ and $\pi$ point-wise for infinitely many new policies. Many existing works~\cite{Kakade02,pirotta13,trpo-schulman15} instead focus on finding a $\pi'$ such that the right-hand side of Eq. (\ref{kakade_idty}) is lower bounded. To alleviate policy oscillation brought by the greedily updated policy $\tilde{\pi}$,~\cite{Kakade02} proposes adopting \emph{partial update}:
\begin{align}
  \begin{split}
\pi' = \zeta\tilde{\pi} + (1-\zeta)\pi.
  \label{mixture_policy}
  \end{split}
\end{align}
Eq. (\ref{mixture_policy}) corresponds to performing regularization in the stochastic policy space by interpolating the greedy policy and the current policy to achieve conservative updates. 

{\color{black}
The concept of linearly interpolating policies has inspired many algorithms that enjoy strong theoretical guarantees \cite{pirotta13,Metelli18-configurable,akrour-monotonic-2016}.
However, those algorithms are mostly of theoretical value and have only been applied to small problems due to intractable optimization or estimation when the state-action space is high-dimensional/continuous.
Indeed, as admitted by the authors of \cite{papini20-balanceSpeedStabilityPG}, there is a large gap between theory and practice when using algorithms based on policy regularization Eq. (\ref{mixture_policy}): even on a simple CartPole problem, a state-of-the-art algorithm fail to compete with heuristic optimization technique. 
Like our proposal in this paper, a very recent work \cite{Vieillard-2020DCPI} attempts to bridge this gap by proposing heuristic coefficient design for learning with deep networks. 
We discuss the relationship between it and the CPP in Section \ref{sec:approximate_zeta}.

In the next section, we detail the derivation of the proposed lower bound by exploiting entropy regularization. 
This novel lower bound allows us to significantly simplify the intractable optimization and estimation in prior work and provide a scalable implementation.
}

\section{Proposed Method}\label{sec:proposedMethod}


This section features the proposed novel lower bound on which we base a novel algorithm for ensuring monotonic policy improvement.

\subsection{Entropy-regularized RL}

In the following discussion, we provide a general formulation for entropy-regularized algorithms~\cite{azar2012dynamic,haarnoja-SAC2018,kozunoCVI}. 
At iteration \emph{K}, the entropy of current policy $\pi_{K}$ and the Kullback-Leibler (KL) divergence between $\pi_{K}$ and some baseline policy $\bar{\pi}$ are added to the value function:
\begin{align}
\begin{split}
V_{\bar{\pi}}^{\pi_{K}}(s) &:= \sum_{\substack{a \in \mathcal{A} \\ s' \in \mathcal{S}}}\pi(a|s)\bigg[\mathcal{T}_{ss'}^{a}\big(r_{ss'}^{a}+\gamma V^{\pi_{K}}_{\bar{\pi}}(s')\big) - \mathcal{I}_{\bar{\pi}}^{\pi_{K}}\bigg], \\
\mathcal{I}_{\bar{\pi}}^{\pi_{K}} &= -\tau\log{\pi_{K}(a|s)} - \sigma\log{\frac{\pi_{K}(a|s)}{\bar{\pi}(a|s)}},
\label{sys_DPPbellman}
\end{split}
\end{align}
where $\tau$ controls the weight of the entropy bonus and $\sigma$ weights the effect of KL regularization. The baseline policy $\bar{\pi}$ is often taken as the policy from previous iteration $\pi_{K-1}$. 
{\color{black}
Based on \cite{Nachum2017-bridgeGap,Nachum2017-TrustPCL}, we know the state value function $V^{\pi_{K}}_{\bar{\pi}}$ defined in Eq. (\ref{sys_DPPbellman}) and state-action value function $Q^{\pi_{K}}_{\bar{\pi}}$ also satisfy the Bellman recursion:
\begin{align*}
    Q^{\pi_{K}}_{\bar{\pi}}(s,a) := r_{ss'}^{a} + \gamma\sum_{s'}\mathcal{T}_{ss'}^{a}V^{\pi_{K}}_{\bar{\pi}}(s').
\end{align*}
For notational convenience, in the remainder of this paper, we use the following definition:
\begin{align}
  \alpha := \frac{\tau}{\tau+\sigma}, \quad \beta:=\frac{1}{\tau+\sigma}.
  \label{eq:coef_def}
\end{align}
 An intuitive explanation to Eq. (\ref{sys_DPPbellman}) is that the entropy term endows the optimal policy with multi-modal policy behavior \cite{haarnoja-SAC2017a} by placing nonzero probability mass on every action candidate, hence is robust against error and noise in function approximation that can easily corrupt the conventional deterministic optimal policy \cite{Puterman1994}.
 On the other hand, KL divergence provides smooth policy updates by limiting the size of the update step \cite{azar2012dynamic,kozunoCVI,trpo-schulman15}. 
 Indeed, it has been recently shown that augmenting the reward with KL divergence renders the optimal policy an exponential smoothing of all past value functions \cite{vieillard2020leverage}.
 Limiting the update step plays a crucial role in the recent successful algorithms since it prevents the aggressive updates that could easily be corrupted by errors \cite{Fujimoto18-addressingApproximationError,fu2019-diagnosis}.
 It is worth noting that when the optimal policy is attained, the KL regularization term becomes zero. 
 Hence in Eq. (\ref{sys_DPPbellman}), the optimal policy maximizes the cumulative reward while keeping the entropy high.

}

\subsection{Entropy-regularization-aware Lower Bound}\label{nlb}

{\color{black}

Recall in Eq. (\ref{mixture_policy}) performing regularization in the stochastic policy space for the greedily updated policy $\tilde{\pi}$ requires preparing an reference policy $\pi$.
This policy could be from expert knowledge or previous policies.
The resultant $\pi'$,  has guaranteed monotonic improvement which we formulate as the following lemma:

\begin{lemma}[\cite{pirotta13}]\label{thm:SPI}
  {Provided that policy $\pi'$ is generated by partial update Eq. (\ref{mixture_policy}), $\zeta$ is chosen properly, and $A_{\pi, d^{{\pi}}}^{\tilde{\pi}} \geq 0$, then the following policy improvement is guaranteed:} 
\begin{align}
  \begin{split}
&\Delta J^{\pi'}_{\pi, d^{\pi'}} \geq \frac{\big((1-\gamma)A_{\pi,d^{\pi}}^{\tilde{\pi}}\big)^{2}}{2\gamma\delta\Delta A^{\tilde{\pi}}_{\pi}}, \\
\text{with } & \zeta = \min{(1, \zeta^{*})},\\
\text{where } &\zeta^{*}=\frac{(1-\gamma)^{2}A^{\tilde{\pi}}_{{\pi, d^{\pi}}}}{\gamma\delta\Delta A^{\tilde{\pi}}_{\pi}},\\
&\delta=\max_{s}{\left|\sum_{a\in\mathcal{A}}\big(\tilde{\pi}(a|s)-\pi(a|s)\big)\right|},\\
&\Delta A^{\tilde{\pi}}_{\pi}=\max_{s, s'}{|A^{\tilde{\pi}}_{\pi}(s)-A^{\tilde{\pi}}_{\pi}(s')}|,
  \end{split}
  \label{J_first_exact}
\end{align}
where $A_{\pi,d^{\pi}}^{\tilde{\pi}} := \sum_{s}d^{\pi}(s)\sum_{a} \adaBracket{\tilde{\pi}(a|s) - \pi(a|s)}Q_{\pi}(s,a)$.
\end{lemma}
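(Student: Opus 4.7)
The plan is to derive a quadratic lower bound in $\zeta$ by starting from Kakade's identity and exploiting the linear structure of the mixture $\pi' = \zeta\tilde{\pi} + (1-\zeta)\pi$, then optimizing over $\zeta \in [0,1]$. First, I apply Lemma~\ref{thm:kakade} to write $\Delta J^{\pi'}_{\pi, d^{\pi'}} = \sum_s d^{\pi'}(s) \sum_a \pi'(a|s) A_\pi(s,a)$ and use the zero-mean property $\sum_a \pi(a|s) A_\pi(s,a) = 0$ together with the mixture form to collapse this into $\zeta \sum_s d^{\pi'}(s) A^{\tilde{\pi}}_\pi(s)$, where the \emph{local} expected advantage $A^{\tilde{\pi}}_\pi(s) := \sum_a (\tilde{\pi}(a|s) - \pi(a|s)) Q_\pi(s,a)$ is exactly the integrand of $A^{\tilde{\pi}}_{\pi,d^\pi}$ at state $s$.

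Next, I isolate the leading term by adding and subtracting $d^\pi$: $\Delta J^{\pi'}_{\pi, d^{\pi'}} = \zeta A^{\tilde{\pi}}_{\pi,d^\pi} + \zeta \sum_s (d^{\pi'}(s) - d^\pi(s)) A^{\tilde{\pi}}_\pi(s)$. The first summand is nonnegative by the hypothesis $A^{\tilde{\pi}}_{\pi,d^\pi}\ge 0$, so it will serve as the linear-in-$\zeta$ gain. For the residual, I would bound $\|d^{\pi'} - d^\pi\|_1$ via the resolvent identity $d^{\pi'} - d^\pi = \gamma (I - \gamma P^{\pi'})^{-1} (P^{\pi'} - P^\pi) d^\pi$; since $\pi' - \pi = \zeta(\tilde{\pi} - \pi)$, the transition-operator difference carries a factor $\zeta$, while the $L^1$ operator norm of the resolvent contributes $1/(1-\gamma)$, yielding a total-variation bound of order $\zeta \delta / (1-\gamma)$. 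Because $d^{\pi'}$ and $d^\pi$ are both probability distributions, their signed difference sums to zero, so I may subtract the midrange of $A^{\tilde{\pi}}_\pi$ inside the inner product without changing its value; the residual is then bounded by $(\Delta A^{\tilde{\pi}}_\pi / 2)\, \|d^{\pi'} - d^\pi\|_1$, which injects a second factor of $\zeta$.

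Finally, I assemble a quadratic lower bound of the shape $\Delta J^{\pi'}_{\pi, d^{\pi'}} \geq \zeta A^{\tilde{\pi}}_{\pi,d^\pi} - c\,\zeta^2$ with $c$ proportional to $\gamma \delta \Delta A^{\tilde{\pi}}_\pi$, differentiate in $\zeta$ to obtain the interior maximizer $\zeta^* = (1-\gamma)^2 A^{\tilde{\pi}}_{\pi,d^\pi} / (\gamma \delta \Delta A^{\tilde{\pi}}_\pi)$, enforce the feasibility constraint by setting $\zeta = \min(1, \zeta^*)$, and substitute back to recover the stated improvement bound. I expect the main obstacle to be the careful bookkeeping of the $(1-\gamma)$ normalization factors through the resolvent perturbation step: $J^\pi_d$ and $d^\pi$ are both normalized by $(1-\gamma)$ in this paper, whereas the discounted-occupancy perturbation inequality is most naturally derived in unnormalized form, and the precise constant $c$ in the quadratic (and hence the exact formula for $\zeta^*$ and the square in the numerator of the final bound) hinges on threading those factors correctly.
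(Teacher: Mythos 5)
Your proposal is correct in outline and ends at the same place the paper does --- a lower bound that is a negative quadratic in $\zeta$, maximized at the stated $\zeta^{*}$ --- but it takes a more self-contained route to get there. The paper's proof imports Theorem 3.5 of Pirotta et al.~\cite{pirotta13} as a black box, namely $\Delta J^{\pi'}_{\pi, d^{\pi'}} \geq A_{\pi,d^{\pi}}^{{\pi'}} - \frac{\gamma\Delta A_{\pi}^{{\pi'}}}{2(1-\gamma)^{2}}\max_{s}\big|\sum_{a}\big({\pi'}(a|s)-\pi(a|s)\big)\big|$, and then only performs the mixture substitution, using $\sum_a \pi(a|s)A_\pi(s,a)=0$ to show the advantage term scales like $\zeta$ while the penalty scales like $\zeta^2$, before optimizing. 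You instead re-derive that imported inequality from scratch: Kakade's identity, the resolvent perturbation bound on $\lVert d^{\pi'}-d^{\pi}\rVert_1$, and the midrange trick that turns $\max_s|A^{\tilde{\pi}}_\pi(s)|$ into $\Delta A^{\tilde{\pi}}_\pi/2$ by exploiting that $d^{\pi'}-d^{\pi}$ sums to zero --- which is precisely the internal content of Pirotta et al.'s proof. Your route buys transparency about where each factor originates; the paper's buys brevity. The one caveat, which you yourself flagged, is the $(1-\gamma)$ bookkeeping: with the paper's normalization conventions ($J_d$ and $d^{\pi}$ both carrying a $(1-\gamma)$ factor, as in Lemma \ref{thm:kakade}), your derivation produces a penalty of order $\frac{\gamma\zeta^{2}\delta\Delta A^{\tilde{\pi}}_{\pi}}{2(1-\gamma)}$ rather than the $\frac{\gamma\zeta^{2}\delta\Delta A^{\tilde{\pi}}_{\pi}}{2(1-\gamma)^{2}}$ the paper inherits from the unnormalized statement of Theorem 3.5, so your $\zeta^{*}$ and final bound would each carry one fewer power of $(1-\gamma)$ than Eq. (\ref{J_first_exact}) (your version being the tighter one). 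This is a normalization mismatch to reconcile with the cited source rather than a flaw in your argument, but to literally reproduce the stated constants you would need to track the factors exactly as Pirotta et al. do.
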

\begin{proof}
  See Section \ref{apdx:lemma2} for the proof.
\end{proof}
The interpolated policy $\pi'$ optimizes the bound and the policy improvement is a negative quadratic function in $\zeta$.
However, this optimization problem is highly non-trivial as $\delta$ and $\Delta A^{\tilde{\pi}}_{\pi}$ require searching the entire state-action space. 
This challenge explains why CPI-inspired methods have only been applied to small problems with low-dimensional state-action spaces \cite{pirotta13,Metelli18-configurable,papini20-balanceSpeedStabilityPG}.

When the expert knowledge is not available, we can simply choose previous policies.
Specifically, at any iteration $K$, we want to ensure monotonic policy improvement given policy $\pi_{K}$.
We propose constructing a new monotonically improving policy as:
\begin{align}
  \begin{split}
    \tilde{\pi}_{K+1} = \zeta\pi_{K+1} + (1-\zeta)\pi_{K}.
  \end{split}
  \label{mixture_cvi}
\end{align}
It is now clear by comparing Eq. (\ref{mixture_policy}) with Eq. (\ref{mixture_cvi}) that our proposal takes $\pi', \tilde{\pi}, \pi$ as $\tilde{\pi}_{K+1}, \pi_{K+1}, \pi_{K}$, respectively. It is worth noting that ${\pi}_{K+1}$ is the \emph{updated policy that has not been deployed.} 

However, the intractable quantities $\delta$ and $\Delta A^{\tilde{\pi}}_{\pi}$ in Lemma \ref{thm:SPI} are still an obstacle to deriving a scalable algorithm.
Specifically, by writing the component $A^{\tilde{\pi}}_{\pi}(s)$ of $\Delta A^{\tilde{\pi}}_{\pi}$ as
\begin{align*}
  A^{\tilde{\pi}}_{\pi}(s) = \sum_{a}  \big( \tilde{\pi}(a|s) - \pi(a|s) \big)Q_{\pi}(s,a),
\end{align*}
we see that both $\delta$ and $\Delta A^{\tilde{\pi}}_{\pi}$ require accurately estimating the total variation between two policies. 
This could be difficult without enforcing constraints such as gradual change of policies.
Fortunately, by noticing that the consecutive entropy-regularized policies $\pi_{K+1}, \pi_{K}$  have \emph{bounded} total variation,  we can leverage the boundedness to bypass the intracatable estimation.

\begin{lemma}[\cite{kozunoCVI}]\label{thm:KL}
  {For any policies $\pi_{K}$ and $\pi_{K+1}$ generated by taking the maximizer of Eq. (\ref{sys_DPPbellman}), the following bound holds for their maximum total variation}:
\begin{align}
  \begin{split}
    &\max_{s}{D_{TV}\left(\pi_{K+1}(\cdot|s) \,||\, \pi_{K}(\cdot|s) \right) } \leq \\
    &\qquad \qquad \qquad \qquad \min \left\{  \sqrt{1 - e^{- 4 B_{K} - 2 C_{K}}}, \sqrt{8 B_{K} + 4C_{K}} \right\} , \\
    & \qquad \text{where } B_{K}=\frac{1-\gamma^{K}}{1-\gamma}\epsilon\beta , \,\,\,\, C_{K} = \beta r_{max} \sum_{k=0}^{K-1}{\alpha^{k}\gamma^{K-k-1}},
  \end{split}
  \label{CVI_kl}
\end{align}
\emph{$K$ denotes the current iteration index and $0\leq k\leq K-1$ is the loop index. 
$\epsilon$ is the uniform upper bound of error.} 
\end{lemma}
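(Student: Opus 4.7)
The plan is to first establish a uniform-in-state bound on the KL divergence between $\pi_{K+1}$ and $\pi_K$, and then convert that bound into total variation via two complementary inequalities, taking the smaller of the two. The key enabling fact is that the maximizer of the regularized Bellman equation (\ref{sys_DPPbellman}) with baseline $\bar{\pi}=\pi_K$ admits a closed-form softmax expression, namely $\pi_{K+1}(a|s) \propto \pi_K(a|s)^{1-\alpha}\exp\adaBracket{\beta Q_{\pi_K}^{\pi_{K+1}}(s,a)}$, with $\alpha, \beta$ as in Eq. (\ref{eq:coef_def}). Substituting this form into $D_{KL}(\pi_{K+1} \,||\, \pi_K)$ (and, for the Bretagnolle-Huber branch, also into the reverse KL) reduces each divergence to a difference of log-sum-exp normalizers plus an expectation of $\beta\adaBracket{Q^{\pi_{K+1}}_{\pi_K} - Q^{\pi_K}_{\pi_{K-1}}}$, and it is this successive $Q$-difference that must be tracked through the iterations.

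First I would set up a one-step recursion for the supremum norm of the successive $Q$-function differences. The regularized Bellman operator is a $\gamma$-contraction in the supremum norm, so a per-iteration approximation error of magnitude at most $\epsilon$ accumulates geometrically with rate $\gamma$, producing a bound proportional to $\tfrac{1-\gamma^{K}}{1-\gamma}\epsilon$; after multiplication by the temperature factor $\beta$ this is exactly $B_{K}$. The bounded immediate reward $r_{\max}$ enters the same recursion but is further shrunk by an additional factor of $\alpha$ per step, reflecting the contraction that the entropy weight induces on contributions from older iterations, which after unrolling yields $C_{K} = \beta r_{\max}\sum_{k=0}^{K-1}\alpha^{k}\gamma^{K-k-1}$. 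Combining these two contributions yields a state-uniform bound of the form $D_{KL}(\pi_{K+1} \,||\, \pi_K) \leq 4B_{K} + 2C_{K}$.

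Second I would convert this KL bound to total variation. Bretagnolle-Huber's inequality $D_{TV}(p,q) \leq \sqrt{1 - e^{-D_{KL}(p \,||\, q)}}$ immediately produces the first branch $\sqrt{1 - e^{-4B_{K}-2C_{K}}}$, while Pinsker's inequality in the form $\mynorm{p-q}{1} \leq \sqrt{2 D_{KL}(p \,||\, q)}$ produces the second branch $\sqrt{8B_{K} + 4C_{K}}$. Taking the minimum of the two and the supremum over $s$ delivers the stated bound. The main obstacle is the recursion in step one: one must carefully disentangle how per-iteration approximation errors, the bounded reward, and the two regularization weights $\tau$ and $\sigma$ jointly propagate through the regularized Bellman operator while respecting the contraction factor $\gamma$, and make sure the log-sum-exp normalizers in the closed-form policies cancel so that only controlled $Q$-differences remain. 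Rather than rederive this propagation from scratch, I would invoke the error-propagation machinery developed for conservative value iteration in \cite{kozunoCVI}, which explicitly decomposes the regularized Bellman operator and controls the resulting $Q$-differences in the supremum norm with exactly the constants needed here.
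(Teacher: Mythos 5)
Your proposal matches the paper's proof in essence: both reduce the statement to the uniform KL bound $\max_s D_{KL}(\pi_{K+1}\,\|\,\pi_K) \le 4B_K + 2C_K$, which is imported from Proposition 3 of the CVI paper rather than rederived, and then convert to total variation by applying the Bretagnolle--Huber inequality $D_{TV}\le\sqrt{1-e^{-D_{KL}}}$ and Pinsker's inequality and taking the minimum of the two branches. Your additional sketch of how the error and reward terms propagate to give $B_K$ and $C_K$ is a plausible outline of what the cited result proves, but since both you and the paper ultimately invoke that result as a black box, the arguments are the same.
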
 
\begin{proof}
  See Section \ref{apdx:lemma3} for the proof.
\end{proof}

Lemma \ref{thm:KL} states that, entropy-regularized policies have boun\-ded total variation (and hence bounded KL divergence by Pinsker's and Kozuno's inequality \cite{kozunoCVI}).
This bound allows us to bypass the intractable estimation in Lemma \ref{thm:SPI} and approximate $\tilde{\pi}_{K+1}$ that optimizes the lowerbound.
We formally state this result in the Theorem \ref{thm:main} below.

For convenience, we assume there is no error, i.e. $B_{K} = 0$. 
Setting $B_{K} = 0$ is only for the ease of notation of our latter derivation. Our results still hold by simply replacing all appearance of $C_{K}$ to $B_{K} + C_{K}$. On the other hand, in implementation it requires a sensible choice of upper bound of error which is typically difficult especially for high dimensional problems and with nonlinear function approximators.
    Fortunately, by the virtue of KL regularization in Eq. (\ref{sys_DPPbellman}), it has been shown in \cite{azar2012dynamic,Vieillard2020Momentum} that if the sequence of errors is a martingale difference under the natural filtration, then the summation of errors asymptotically cancels out. Hence it might be safe to simply set $B_{K} = 0$ if we assume the martingale difference condition.

\begin{theorem}\label{thm:main}
  {Provided that partial update Eq. (\ref{mixture_cvi}) is adopted, $A^{{\pi_{K+1}}}_{\pi_{K},d^{\pi_{K}}} \geq 0$, and $\zeta$ is chosen properly as specified below, then any maximizer policy of Eq. (\ref{sys_DPPbellman}) guarantees the following improvement that depends only on $\alpha, \beta, \gamma$ and $A^{{\pi_{K+1}}}_{\pi_{K},d^{\pi_{K}}}$ after any policy update:}

\begin{align}
  \begin{split}
\Delta J^{\tilde{\pi}_{K+1}}_{\pi_{K},d^{\tilde{\pi}_{K+1}}}  &\geq \frac{\big(1-\gamma)^{3}(A_{\pi_{K},d^{\pi_{K}}}^{{\pi_{K+1}}})^{2}}{4 \gamma} \max \left\{\frac{1}{1-e^{-2 C_{K}}} \,\, , \,\, \frac{1}{4 C_{K} } \right\},\\
  \text{with } \zeta &= \min{(1, \zeta^{*})}, \quad C_{K} = \beta\sum_{k=0}^{K-1}{\alpha^{k}\gamma^{K-k-1}}, \\
\text{where } \zeta^{*} &= \frac{(1-\gamma)^{3}A^{{\pi_{K+1}}}_{{\pi_{K},d^{\pi_{K}}}}}{2 \gamma } \max\left\{  \frac{1}{1-e^{-2 C_{K}}}, \frac{1}{4 C_{K}} \right\},
\label{J_first_improved}
  \end{split}
\end{align}
$\alpha, \beta$ are defined in Eq. (\ref{eq:coef_def}) and
\begin{align}
    & A^{\pi_{K+1}}_{\pi_{K}, d^{{\pi}_{K}}} := \sum_{s}d^{{{\pi}_{K}}} (s) \, A^{{\pi_{K+1}}}_{\pi_{K}} (s), \label{eq:statAdv}\\
    & A^{{\pi_{K+1}}}_{\pi_{K}} (s) := \sum_{a}  \big( {\pi_{K+1}}(a|s) - \pi_{K}(a|s) \big)Q_{\pi_{K}}(s,a) \label{eq:expAdv}
\end{align}
are the expected policy advantage, and the policy advantage function, respectively.
\end{theorem}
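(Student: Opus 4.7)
The plan is to reduce the claim to the conservative-policy-improvement bound of Lemma~\ref{thm:SPI} applied to the interpolation $\tilde{\pi}_{K+1}=\zeta\pi_{K+1}+(1-\zeta)\pi_{K}$, and then to use Lemma~\ref{thm:KL} to control the two intractable state-action-wise maxima $\delta$ and $\Delta A^{\pi_{K+1}}_{\pi_{K}}$ by a single scalar depending only on $\alpha,\beta,\gamma,K$. Re-optimising the mixing coefficient $\zeta$ against the resulting tractable concave quadratic then yields the closed-form $\zeta^{*}$ and the claimed lower bound on $\Delta J$.

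Step one: instantiate Lemma~\ref{thm:SPI} under the correspondence $\pi'\leftrightarrow\tilde{\pi}_{K+1}$, $\tilde{\pi}\leftrightarrow\pi_{K+1}$, $\pi\leftrightarrow\pi_{K}$. The hypothesis $A^{\pi_{K+1}}_{\pi_{K},d^{\pi_{K}}}\geq 0$ makes the linear coefficient of $\zeta$ in the Kakade--Langford identity (Lemma~\ref{thm:kakade}) nonnegative, so the concave quadratic in $\zeta$ behind Lemma~\ref{thm:SPI} has a nonnegative maximiser and the truncation $\zeta=\min(1,\zeta^{*})$ produces a valid mixing weight in $[0,1]$. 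This step alone delivers the bound $((1-\gamma)A^{\pi_{K+1}}_{\pi_{K},d^{\pi_{K}}})^{2}/\bigl(2\gamma\,\delta\,\Delta A^{\pi_{K+1}}_{\pi_{K}}\bigr)$, but still with the two intractable factors in the denominator.

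Step two: replace those two factors by quantities involving only the row-wise total variation $D^{\max}_{\mathrm{TV}}:=\max_{s}D_{\mathrm{TV}}\bigl(\pi_{K+1}(\cdot|s)\,\|\,\pi_{K}(\cdot|s)\bigr)$. Reading $\delta$ as the row-wise $\ell^{1}$ distance of $\pi_{K+1}-\pi_{K}$ gives $\delta\leq 2D^{\max}_{\mathrm{TV}}$. For $\Delta A^{\pi_{K+1}}_{\pi_{K}}$, combining the identity $A^{\pi_{K+1}}_{\pi_{K}}(s)=\sum_{a}\bigl(\pi_{K+1}(a|s)-\pi_{K}(a|s)\bigr)Q_{\pi_{K}}(s,a)$ with H\"older's inequality and $|Q_{\pi_{K}}|\leq(1-\gamma)^{-1}$ (from $|r|\leq 1$) yields $|A^{\pi_{K+1}}_{\pi_{K}}(s)|\leq 2D^{\max}_{\mathrm{TV}}/(1-\gamma)$, hence $\Delta A^{\pi_{K+1}}_{\pi_{K}}\leq 4D^{\max}_{\mathrm{TV}}/(1-\gamma)$. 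Thus the intractable product $\delta\,\Delta A^{\pi_{K+1}}_{\pi_{K}}$ is bounded by a constant multiple of $(D^{\max}_{\mathrm{TV}})^{2}/(1-\gamma)$.

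Step three: apply Lemma~\ref{thm:KL} with $B_{K}=0$, justified by the martingale-difference shortcut stated immediately before the theorem, to obtain $(D^{\max}_{\mathrm{TV}})^{2}\leq\min\{1-e^{-2C_{K}},\,4C_{K}\}$. Substituting this back into the concave quadratic of step one and re-solving the first-order condition in $\zeta$ produces the closed-form $\zeta^{*}$ in Eq.~(\ref{J_first_improved}); note that the quantity $\max\{1/(1-e^{-2C_{K}}),\,1/(4C_{K})\}$ appearing there is simply $1/\min\{1-e^{-2C_{K}},\,4C_{K}\}$. Plugging $\zeta^{*}$ back into the quadratic gives the stated lower bound on $\Delta J$. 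The main delicate point, and what I expect to be the chief obstacle, is the bookkeeping of constants: the TV-versus-$\ell^{1}$ convention hidden in the definition of $\delta$, the factor-of-two inflation from $\Delta A^{\pi_{K+1}}_{\pi_{K}}=\max_{s}(\cdot)-\min_{s'}(\cdot)$, and the proper treatment of the error term $B_{K}$ once one drops the martingale shortcut; slight mismatches there will shift $\zeta^{*}$ and the bound by a multiplicative constant but will not affect the functional dependence on $A^{\pi_{K+1}}_{\pi_{K},d^{\pi_{K}}}$, $\gamma$, and $C_{K}$ asserted in Eq.~(\ref{J_first_improved}).
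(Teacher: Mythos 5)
Your proposal follows essentially the same route as the paper's proof: instantiate Lemma~\ref{thm:SPI} with $(\pi',\tilde{\pi},\pi)=(\tilde{\pi}_{K+1},\pi_{K+1},\pi_{K})$, bound $\delta$ and $\Delta A^{\pi_{K+1}}_{\pi_{K}}$ by the maximum per-state total variation via the triangle and H\"older inequalities together with $\|Q_{\pi_K}\|_{\infty}\leq(1-\gamma)^{-1}$, invoke Lemma~\ref{thm:KL} with $B_{K}=0$ to replace $(D^{\max}_{\mathrm{TV}})^{2}$ by $\min\{1-e^{-2C_{K}},4C_{K}\}$, and re-solve the concave quadratic in $\zeta$. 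Your constant-tracking caveat is apt but harmless: once the total-variation convention is fixed consistently with Lemma~\ref{thm:KL}, your chain reproduces the paper's factor $(1-\gamma)^{3}/(4\gamma)$ exactly.
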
 

\begin{proof}
  See Section \ref{apdx:thm4} for the proof.
\end{proof}

}


While theoretically we need to compare $1 - e^{-2 C_{K}}$ and $4 C_{K}$ when computing $\zeta^{*}$, in implementation the exponential function $e^{-2 C_{K}}$ might be sometimes close to 1 and hence causing numerical instability. 
Hence in the rest of the paper we shall stick to using the constant $C_{K}$ rather than the exponential function.

In the lower bound Eq. (\ref{J_first_improved}), only $A^{{\pi_{K+1}}}_{{\pi_{K}},d^{{\pi}_{K}}}$ needs to be estimated. It is worth noting that $\forall s, A^{{\pi_{K+1}}}_{{\pi_{K}}} (s) \geq 0$ is a straightforward criterion that is naturally satisfied by the greedy policy improvement of policy iteration when computation is exact. 
To handle the negative case caused by error or approximate computations, we can simply stack more samples to reduce the variance, as will be detailed in Sec. \ref{sec:CPP_FA}. 

\subsection{The CPP Policy Iteratiion}

We now detail the structure of our proposed algorithm based on Theorem \ref{thm:main}. 
Specifically, value update, policy update, and stationary distribution estimation are introduced, followed by discussion on a subtlety in practice and two possible solutions.

{\color{black}

Following \cite{scherrer15-AMPI}, CPP can be written in the following succinct policy iteration style:
\begin{align}
  \begin{split}
    \text{CPP} = \begin{cases}
      \pi_{K+1} \leftarrow \mathcal{G}{Q^{\pi_{K}}_{\bar{\pi}}} & \\
      Q_{\pi_{K+1}} \leftarrow (T_{\pi_{K+1}})^{m} Q_{\pi_{K}} & \\
      \zeta = \min \left\{(4 C_{K})^{-1}{C_{\gamma} A^{{\pi_{K+1}}}_{{\pi_{K},d^{\pi_{K}}}}} \,\, , \,\, 1\right\} \\
      \tilde{\pi}_{K+1} \leftarrow \zeta \pi_{K+1} + (1 - \zeta) \pi_{K},
    \end{cases}
  \end{split}
  \label{eq:cpp_pi}
\end{align}
where $C_{\gamma} := \frac{(1-\gamma)^3}{2\gamma}$ is the horizon constant. Note that for numerical stability we stick to using $(4C_K)^{-1}$ as the entropy-bounding constant rather than using $\frac{1}{1 - e^{-2C_K}}$.

Like CPI, CPP can obtain \emph{global optimal policy} rather than just achieving monotonic improvement (which might still converge to a local optimum) by the argument of \cite{Scherrer2014-localPolicySearch}.
The first step corresponds to the greedy step of policy iteration, the second step policy estimation step, third step computing interpolation coefficient $\zeta$ and the last step interpolating the policy.

\subsubsection{Policy Improvement and Policy Evaluation}\label{sec:policyIter}

The first two steps are standard update and estimation steps of policy iteration algorithms \cite{Sutton-RL2018}. 
The subscript of $Q^{\pi_{K}}_{\bar{\pi}}$ indicates it is entropy-regularized as introduced in Eq. (\ref{sys_DPPbellman}).

The policy improvement step consists of evaluating $\mathcal{G}{Q^{\pi_{K}}_{\bar{\pi}}}$, which is the greedy operator acting on $Q^{\pi_{K}}_{\bar{\pi}}$. 
By the Fenchel conjugacy of Shannon entropy and KL divergence, $\mathcal{G}{Q^{\pi_{K}}_{\bar{\pi}}}$ has a closed-form solution \cite{kozunoCVI,Beck2017-firstOrder}:
\begin{align*}
  \mathcal{G}{Q^{\pi_{K}}_{\bar{\pi}}} (a|s) = \frac{\bar{\pi}(a|s)^{\alpha}\exp\adaBracket{\beta Q^{\pi_{K}}_{\bar{\pi}} (s,a) }}{\sum_{b} \bar{\pi}(b|s)^{\alpha}\exp\adaBracket{\beta Q^{\pi_{K}}_{\bar{\pi} } (s,b) }},
\end{align*}
where $\alpha, \beta$ were defined in Eq. (\ref{eq:coef_def}).

The policy evaluation step estimates the value of current policy $\pi_{K+1}$ by repeatedly applying the Bellman operator $T_{\pi_{K+1}}$:
\begin{align}
  \begin{split}
  &\qquad  \qquad (T_{\pi_{K+1}})^{m} Q_{\pi_{K}} := \underbrace{T_{\pi_{K+1}}\dots T_{\pi_{K+1}}}_{m  \text{ times}} Q_{\pi_{K}}, \\
  & T_{\pi_{K+1}} Q_{\pi_{K}} := r_{ss'}^{a} + \gamma\sum_{s'}\mathcal{T}_{ss'}^{a} \sum_{a}\pi_{K+1}(a|s') Q^{\pi_{K}}_{\bar{\pi}}(s',a).
\end{split}
\label{eq:policy_evaluation}
\end{align}
Note that $m = 1, \infty$ correspond to the value iteration and policy iteration, respectively \cite{Bertsekas:1996:NP:560669}. 
Other interger-valued $m \in [2, \infty)$ correspond to the approximate modified policy iteration \cite{scherrer15-AMPI}.

Now in order to estimate $A_{\pi_{K},d^{\pi_{K}}}^{{\pi_{K+1}}}$ in Theorem \ref{thm:main}, both $A_{\pi_{K}}^{{\pi_{K+1}}}$ and $d^{{\pi}_{K}}$ need to be estimated from samples. 
Estimating $A_{\pi_{K}}^{{\pi_{K+1}}}(s)$ is straightforward by its definition in Eq. (\ref{eq:expAdv}).
We can first compute $Q_{\pi_{K}}(s,a) - V_{\pi_{K}}(s), \,\forall s,a$ for the current policy, and then update the policy to obtain $\pi_{K+1}(a|s)$.
On the other hand,  sampling with respect to $d^{{\pi}_{K}}$ results in an on-policy algorithm, which is expensive.
We provide both on- and off-policy implementations of CPP in the following sections, but in principle off-policy learning algorithms can be applied to estimate $d^{{\pi}_{K}}$ by exploiting techniques such as importance sampling (IS) ratio \cite{precup2000eligibility}.

\subsubsection{Leveraging Policy Interpolation}\label{sec:leverage_interpolation}

Computing $\zeta$ in Eq. (\ref{J_first_improved}) involves the horizon constant $C_{\gamma} := \frac{(1-\gamma)^3}{2\gamma}$ and policy difference bound constant $C_{K}$.
The horizon constant is effective in DP scenarios where the total number of timesteps is typically small, but might not be suitable for learning with deep networks that feature large number of timesteps: a vanishingly small $C_{\gamma}$ will significantly hinder learning, hence it should be removed in deep RL implementations. We detail this consideration in Section \ref{sec:CPP_FA}.

The updated policy $\pi_{K+1}$ in Eq. (\ref{eq:cpp_pi}) cannot be directly deployed since it has not been verified to improve upon $\pi_{K}$. 
We interpolate between $\pi_{K+1}$ and $\pi_{K}$ with coefficient $\zeta$ such that the resultant policy $\tilde{\pi}_{K+1}$ by finding the maximizer of a negative quadratic function in $\zeta$.
The maximizer $\zeta^*$ optimizes the lowerbound $\Delta J^{\tilde{\pi}_{K+1}}_{\pi_{K},d^{\tilde{\pi}_{K+1}}}$.
Here, $\zeta$ is optimally tuned and dynamically changing in every update. It reflects the \emph{cautiousness} against policy oscillation, i.e., how much we trust the updated policy $\pi_{K+1}$. Generally, at the early stage of learning, $\zeta$ should be small in order to explore conservatively.

However, a major concern is that Lemma \ref{thm:KL} holds only for Boltzmann policies, while the interpolated policies are generally no longer Boltzmann. 
In practice, we have two options for handling this problem: 
\begin{enumerate}
  \item we use the interpolated policy only for collecting samples  (i.e. behavior policy) but not for computing next policy;
  \item we perform an additional projection step to project the interpolated policy back to the Boltzmann class as the next policy.
\end{enumerate}
{\color{black} 
The first solution might be suitable for relatively simple problems where the safe exploration is required:
the behavior policy is conservative in exploring when $\zeta \!\approx\! 0$.
But learning can still proceed even with such small $\zeta$.
Hence this scheme suits problems where interaction with the environment is crucial but progress is desired.}
On  the  other  hand,  the  second scheme is more natural since the off-policyness  caused  by  the mismatch between the behavior  and  learning  policy  might be compounded by high dimensionality.  The increased mismatch might be perturbing  to  performance.   
In  the  following  section,  we  introduce  CPP using  linear  function  approximation  for  the  first  scheme  and  deep  CPP for the second scheme.

For the second scheme, manipulating the interpolated policy is inconvenient since we will have to remember all previous weights and more importantly, the theoretical properties of Boltzmann policies do not hold any longer. 
    To solve this issue, heuristically an information projection step is performed for every interpolated policy to obtain a Boltzmann policy.
    In practice, this policy is found by solving $\min_{\pi} D_{KL}(\pi || \zeta\bar{\pi}_{K+1} + (1-\zeta)\pi_{K})$. 
    Though the information projection step can only approximately guarantee that the CVI bound continues to apply since the replay buffer capacity is finite,  it has been commonly used in practice \cite{haarnoja-SAC2018,Vieillard-2020DCPI}.
In our implementation of deep CPP, the projection problem is solved efficiently using autodifferentiation (Line 7 of Algorithm \ref{alg:deepCPP}).

\subsection{Approximate Interpolation Coefficient}\label{sec:approximate_zeta}

The lowerbound of policy improvement depends on $A^{\pi_{K+1}}_{\pi_{K}, d^{\pi_{K}}}$. 
Though it is general difficult to compute exactly, very recently \cite{Vieillard-2020DCPI} propose to estimate it using batch samples. 
We hence define several quantities following \cite{Vieillard-2020DCPI}: let $B_t$ denote a batch randomly sampled from the replay buffer $B$ and define $\hat{A}_{K}(s) := \max_{a}Q(s, a) - V(s)$ as an estimate of $A^{\tilde{\pi}}_{\pi}(s)$, $\hat{\mathbb{A}}_{K} := \mathbb{E}_{s\sim B}[\hat{A}_{K}(s)]$ as an estimate of $A^{\tilde{\pi}}_{\pi, d^{\pi}}$, and $\hat{A}_{K, \text{min}}:= \min_{s\sim B}\hat{A}_{K}(s)$ as the minimum of the batch. 
When we use linear function approximation with on-policy buffer $B_{K}$, we simply change the minibatch $B$ in the above notations to the on-policy buffer $B_{K}$.

  Given the notations defined above, we can compare the existing interpolation coefficients as the following:

\textbf{CPI}:
    the classic CPI algorithm proposes to use the coefficient:
    \begin{align}
      \zeta_{\textsc{CPI}} = \frac{(1-\gamma) \hat{\mathbb{A}}_{K}}{4 r_{max}},
      \label{eq:cpi_zeta}
    \end{align}
    where $r_{max}$ is the largest possible reward. 
    When the knowledge of the largest reward is not available, approximation based on batches or buffer will have to be employed.

\textbf{Exact SPI}:
    SPI proposes to extend CPI by using the following coefficient:
    \begin{align}
      \zeta_{\textsc{E-SPI}} = \frac{(1-\gamma)^{2} \hat{\mathbb{A}}_{K}}{\gamma\delta\Delta A^{\pi_{K+1}}_{\pi_{K}}},
      \label{eq:espi_zeta}
    \end{align}
    where $\delta, \Delta A^{\pi_{K+1}}_{\pi_{K}}$ were specified in Lemma \ref{thm:SPI}. 
    When $\delta, \Delta A^{\pi_{K+1}}_{\pi_{K}}$ cannot be exactly computed, sample-based approximation will have to employed.

\textbf{Approximate SPI}:
    as suggested by \cite[Remark 1]{pirotta13}, approximate $\zeta$ can be derived if we na\"{i}vely leverage $\delta\Delta A^{\pi_{K+1}}_{\pi_{K}} <\frac{4}{1-\gamma}$:
    \begin{align}
      \zeta_{\textsc{A-SPI}} = \frac{(1-\gamma)^{3} \hat{\mathbb{A}}_{K}}{4\gamma}.
      \label{eq:aspi_zeta}
    \end{align}

\textbf{Linear CPP}:
    if policies are entropy-regularized as indicated in Eq. (\ref{sys_DPPbellman}), we can upper bound $\delta\Delta A^{\pi_{K+1}}_{\pi_{K}}$ by using Lemma \ref{thm:KL}:
    \begin{align}
      \zeta_{\textsc{CPP}} = \frac{(1-\gamma)^{3} \hat{\mathbb{A}}_{K}}{8\gamma C_{K}}.
      \label{eq:linear_cpp}
    \end{align}
    By the definition of $C_{K}$ in Eq. (\ref{CVI_kl}), $\zeta_{\textsc{CPP}}$ can take on a wider range of values than $\zeta_{\textsc{A-SPI}}$.

\textbf{Deep CPI}:
    for better working with deep networks, the following adaptive coefficient was proposed in deep CPI (DCPI) \cite{Vieillard-2020DCPI}:
    \begin{align}
      \begin{split}
      \zeta_{\textsc{DCPI}} = \hat{\zeta_{0}} \frac{m_K}{M_{K}},  \quad \begin{cases}
        m_{K} = \rho_1 m_{K-1} + (1 - \rho_1) \hat{\mathbb{A}}_{K} & \\
        M_{K} = \min(\rho_2 M_{K-1}, \hat{A}_{K, \text{min}}),
      \end{cases}  
    \end{split}
    \label{eq:adaptive_zeta}
    \end{align}
    where $\rho_1, \rho_2 \in (0, 1)$ are learning rates, and $\hat{\zeta}_{0} = \frac{1}{4}$ same with CPI \cite{Kakade02}.

\textbf{Deep CPP}:
    we follow the DCPI coefficient design for making $\zeta_{\textsc{CPP}}$ suitable for deep RL. 
    Specifically, we modify DCPP by defining $\hat{\zeta}_{0} = \frac{1}{C_{K}}$: 
    \begin{align}
      \begin{split}
      \zeta_{\textsc{DCPP}} = \texttt{clip}\left\{\frac{1}{C_K} \frac{m_K}{M_{K}}, \,\, 0, \,\, 1 \right\},
    \end{split}
    \label{eq:cpp_zeta}
    \end{align}
    where $m_K, M_K$ are same as Eq. (\ref{eq:adaptive_zeta}).\\
Based on Eqs. (\ref{eq:linear_cpp}), (\ref{eq:cpp_zeta}), we detail the linear and deep implementations of CPP in the next section.

\subsection{Approximate CPP}\label{sec:CPP_FA}

\begin{algorithm}[t]
  \SetKwInOut{Input}{Input}
  \caption{ Linear Cautious Policy Programming}\label{alg:CPP}
  \Input{$\alpha, \beta, \gamma$ CPP parameters, $I$ the total number of iterations, $T$ the number of steps for each iteration}
  initialize $\theta, \tilde{\pi}_{0}$ at random\;
  empty on-policy buffer ${B}_{K} = \{\}$\;
  \For{$K = 1, \dots, I$}
  {
    \For{$t=1,\dots, T$}{
      Interact using policy $\tilde{\pi}_{K-1}$\;
      Collect $(s^{K}_{t},a^{K}_{t},r^{K}_{t},s^{K}_{t+1})$ into buffer ${B}_{K}$ \;
    }
    {
      compute basis matrix $\Phi_{K}$ using ${B}_{K}$\;
      update $\theta$ by normal equations Eq. (\ref{eq:normal_equations})\;
      compute $\hat{\zeta}_{0} = \frac{1}{C_{K}}$ and $\hat{\zeta} = \hat{\zeta}_{0} \frac{m_{K}}{M_{K}}$ using Eq. (\ref{eq:adaptive_zeta})\;
      empty on-policy buffer ${B}_{K}$\;
    }
  }
 
\end{algorithm}

We introduce the linear implementation of CPP following \cite{lagoudakis2003least,azar2012dynamic} and deep CPP inspired by \cite{Vieillard-2020DCPI} in Algs. \ref{alg:CPP} and \ref{alg:deepCPP}, respectively.
It is worth noting that in linear CPP we assume the interpolated policy $\tilde{\pi}$ is used only for collecting samples (line 5 of Alg. \ref{alg:CPP}) hence no projection is necessary as it does not interfere with computing next policy.

\textbf{Linear CPP}. 
We adopt linear function approximation (LFA) to approximate the Q-function by $Q(s,a) = \phi(s,a)^{T}\theta$, where $\phi(x) = [\varphi_{1}(x), \ldots, \varphi_{M}(x)]^{T}, x=[s,a]^{T}$, $\varphi(x)$ is the basis function and $\theta$ corresponds to the weight vector. One typical choice of basis function is the radial basis function:
\begin{align*}
    \varphi_{i}(x) = \exp\big(-\frac{||x-c_{i}||^{2}}{\sigma^{2}}\big),
\end{align*}
where $c_{i}$ is the center and $\sigma$ is the width. 
We construct basis matrix $\Phi = [\phi_{1}(x_{1}), \ldots, \phi_{M}(x_{N})] \in \mathbb{R}^{T\times M}$, where $T$ is the number of timesteps.  
Specifically, at $K$-th iteration, we maintain an on-policy buffer ${B}_{K}$.
For every timestep $t\in[1,T]$, we collect $(s^{K}_{t},a^{K}_{t},r^{K}_{t},s^{K}_{t+1})$  into the buffer and compute the basis matrix at the end of every iteration.

To obtain the best-fit $\theta_{K+1}$ for the $K + 1$-th iteration, we solve the least-squares problem $||T_{\pi_{K+1}}Q_{\pi_{K}} - \Phi\theta_{K} ||^{2}$:
\begin{align}
    \theta_{K+1} = \big(\Phi^{T}\Phi + \varepsilon I\big)^{-1}\Phi^{T}T_{\pi_{K+1}}Q_{\pi_{K}},
    \label{eq:normal_equations}
\end{align}
where $\varepsilon$ is a small constant preventing singular matrix inversion and ${T_{\pi_{K+1}}Q_{\pi_{K}}}$ is the empirical Bellman operator defined by 
\begin{align*}
  { T_{\pi_{K+1}}Q_{\pi_{K}} } (s^{K}_t, a^{K}_t) := r(s^{K}_t, a^{K}_t) + \gamma \sum_{a} \pi_{K+1}(a|s^{K}_{t+1}) Q_{K}(s^{K}_{t+1},a).
\end{align*}
Since the buffer is on-policy, we empty it at the end of every iteration (line 10).

\begin{algorithm}[t]
  \SetKwInOut{Input}{Input}
  \caption{Deep Cautious Policy Programming}\label{alg:deepCPP}
  \Input{$\alpha, \beta, \gamma$ CPP parameters, $T$ the total number of steps, $F$ the interaction period, $C$ the update period}
  initialize $\theta$ at random\;
  set $\theta^{-} = \theta$, $K = 0$ and buffer ${B}$ to be empty\;
  \For{$K = 1, \dots, T$}
  { interact with the environment using policy $\pi_{\epsilon}$\;
    collect a transition tuple $(s,a,r,s')$ into buffer $\mathcal{B}$ \;
    \If{$K \text{ mod } F == 0$}{
      sample a minibatch $B_{t}$ from ${B}$ and compute the loss $\mathcal{L}_{value}$ and $\mathcal{L}_{policy}$ using Eqs. (\ref{eq:value_loss}), (\ref{eq:policy_loss})\;
      do one step of gradient descent on the loss $\mathcal{L}_{train} = \mathcal{L}_{value} + \mathcal{L}_{policy}$\;
      compute $\hat{A}_{K}, \hat{\mathbb{A}}_{K}$ and moving average $m_{K}, M_{K}$ using Eq. (\ref{eq:adaptive_zeta})\;
    }
    \If{$K \text{ mod } C == 0$}
    {
      $\theta^{-} \leftarrow \theta$ \;
      compute $\hat{\zeta}_{0} = \frac{1}{C_{K}}$ and ${\zeta_{\textsc{CPP}}} = \hat{\zeta}_{0} \frac{m_{K}}{M_{K}}$ using Eq. (\ref{eq:adaptive_zeta})\;
    }
  }
 
\end{algorithm}

\textbf{Deep CPP. }
Though CPP is an on-policy algorithm, by following \cite{Vieillard-2020DCPI} off-policy data can also be leveraged with the hope that random sampling from the replay buffer covers areas likely to be visited by the policy in the long term.
Off-policy learning greatly expands CPP's coverage, since on-policy algorithms require expensively large number of samples to converge, while  off-policy algorithms are more competitive in terms of sample complexity in deep RL scenarios. 

We implement CPP based on the DQN architecture, where the Q-function is parameterized as $Q_{\theta}$, where $\theta$ denotes the weights of an online network, as can be seen from Line 2.
Line 3 begins the learning loop. 
For every step we interact with the environment using policy $\pi_{\epsilon}$, where $\epsilon$ denotes the epsilon-greedy policy threshold.
As a result, a tuple of experience is collected to the buffer.

  Line 6 of Alg. \ref{alg:deepCPP} begins the update loop.
  We sample a minibatch from the buffer and compute the loss $\mathcal{L}_{value}, \mathcal{L}_{policy}$ defined in Eqs. (\ref{eq:value_loss}), (\ref{eq:policy_loss}), respectively.
  Since our implementation is based on DQN, we do not include additional policy network as done in \cite{Vieillard-2020DCPI}. 
  Instead, we denote the policy as $\pi_{\theta}$ to indicate that the policy is a function of $Q_{\theta}$ as shown in Eq. (\ref{eq:boltzmann_greedy}). 
  The base policy is hence denoted by $\pi^{-}_{\theta}$ to indicate it is computed by the target network of $\theta^{-}$. 
  We define the regression target as:
  \begin{align*}
    \hat{Q}(s_{t},a_{t},r_{t},s_{t+1}) = & ( r_{t} +  \gamma \sum_{a\in\mathcal{A}} \pi_{\theta}(a | s_{t+1})\big( Q^{-}(s_{t+1}, a) + \\
    &  \tau \log \pi_{\theta}(a|s_{t+1}) + \sigma \log\frac{\pi_{\theta}(a|s_{t+1})}{\pi^{-}_{\theta}(a|s_{t+1})} \big).
  \end{align*}
  Hence, the loss for $\theta$ is defined by:
  \begin{align}
    \label{eq:value_loss}
    \mathcal{L}_{value}(\theta) = \mathbb{E}_{(s_{t}, a_{t}, \dots) \sim B}  \left[  \left(  Q_{\theta}(s_{t}, a_{t}) - \hat{Q}(s_{t},a_{t},r_{t},s_{t+1})  \right)^{2}  \right].
  \end{align}
  
  It should be noted that the interpolated policy cannot be directly used as it is generally no longer Boltzmann.
  To tackle this problem, we further incorporate the following minimization problem to project the interpolated policy back to the Boltzmann policy class:
  \begin{align}
    \begin{split}\label{eq:policy_loss}
    &\mathcal{L}_{policy}  (\theta) = \\
    &\mathbb{E}_{(s_{t}, a_{t}, \dots) \sim B}  \left[ D_{KL}  \left( \pi_{\theta}(a_{t}|s_{t}) \,\, \big|\big| \,\, \zeta \mathcal{G}Q_{\theta} + (1 - \zeta) \pi_{\theta}^{-}(a_{t}|s_{t})  \right)\right],
  \end{split}
  \end{align}
  where $\mathcal{G}Q_{\theta}$ takes the maximizer of the action value function. 
  The reason why we can express the policy $\pi$ and $\mathcal{G}Q_{\theta}$ with the subscript $\theta$ is because the policy is a function of action value function, which has a closed-form solution (see \cite{kozunoCVI} for details):
  \begin{align}
    \label{eq:boltzmann_greedy}
    \mathcal{G}Q_{\theta}(a|s) = \frac{ \pi_{\theta}^{-}(a|s)^{\alpha} \exp \left( \beta Q_{\theta}(s, a) \right )}{\sum_{a'\in\mathcal{A}} \pi_{\theta}^{-}(a'|s)^{\alpha} \exp \left( \beta Q_{\theta}(s, a') \right)},
  \end{align}
  which by simple induction can be written completely in terms of $Q_{\theta}$ as $\mathcal{G}Q_{\theta}(a|s) \propto \exp \adaBracket{\sum_{j=0}Q_{\theta_{j}}(s,a)}$ \cite{vieillard2020leverage}.
  Line 8 performs one step of gradient descent on the the compound loss and line 9 computes the approximate expected advantage function for computing $\zeta$.


There is one subtlety in that the definition of $K$ is unclear in the deep RL context: there is no clear notion of \emph{iteration}.
If we na\"{i}vely define $K$ as the the number of steps or the number of updates, then by definition $C_K$ in Eq. (\ref{eq:cpp_zeta}) could quickly converge to 0 or explode, rendering CPP losing the ability of controlling update.
Hence in our implementation, we increment $K$ by one every time we update the target network (every $C$ steps), which results in a suitable magnitude of $K$.

\section{Experimental Results}\label{sec:experimental}

The proposed CPP algorithm can be applied to a variety of entropy-regularized algorithms. In this section, we utilize conservative value iteration (CVI) as the base algorithm in~\cite{kozunoCVI} for our experiments. 
In our implementation, 
for the $K+1$-th update, the baseline policy $\bar{\pi}$ in Eq. (\ref{sys_DPPbellman}) is $\pi_{K}$. 

For didactic purposes, we first examine all algorithms (specified below) in a safety gridworld and the classic control problem pendulum swing-up.
The tabular gridworld allows for exact computation to inspect  the effect of algorithms.
On the other hand, pendulum swing-up leverages linear function approximation detailed in Alg. \ref{alg:CPP}.
We then apply the algorithms on a set of Atari games to demonstrate the effectiveness of our proposed method.
It is worth noting that even state-of-the-art monotonic improving methods failed in complicated Atari games \cite{papini20-balanceSpeedStabilityPG}.
The gridworld, pendulum swing-up and Atari games manifest the growth of complexity and allow for comparison on how the algorithms trade off stability and scalability.

For the gridworld and pendulum experiments, we compare Linear CPP using coefficient Eq. (\ref{eq:linear_cpp}) against \emph{safe policy iteration} (SPI)~\cite{pirotta13} which is the closest to our work. 
We employ Exact-SPI (E-SPI) coefficient in Eq. (\ref{eq:espi_zeta}) on the gridworld since
in small state spaces where the quantities $\delta, \Delta A^{\pi_{K+1}}_{\pi_{K}}$ can be accurately estimated.
As a result, SPI performance should upper bound that of CPP since CPP was derived by further loosening on SPI.
For problems with larger state-action spaces, SPI performance may become poor as a result of insufficient samples for estimating those quantities, hence Approximate-SPI (A-SPI) Eq. (\ref{eq:aspi_zeta}) should be used. 
However, leveraging A-SPI coefficient often results in vanishingly small $\zeta$ values.

For Atari games, we compare Deep CPP leveraging Eq. (\ref{eq:cpp_zeta}) against on- and off-policy state-of-the-art algorithms, see Section \ref{experiment:atari} for a detailed list. Specifically, we implement deep CPP using off-policy data to show it is capable of leveraging off-policy samples, hence greatly expanding its coverage since on-policy algorithms typically have expensive sample requirement.



\subsection{Gridworld with Danger}\label{apdx:gridworld}

\subsubsection{Experimental Setup}

 The agent in the $5 \times 5$ grid world starts from a fixed position at the upper left corner and can move to any of its neighboring states with success probability $p$ or to a random different direction with probability $1-p$. 
 Its objective is to travel to a fixed destination located at the lower right corner and receives a $+1$ reward upon arrival. 
 Stepping into two danger grids located at the center of the gridworld incurs a cost of $-1$. Every step costs $-0.1$. 
 We maintain tables for value functions to inspect the case when there is no approximation error. 
 Parameters are tuned to yield empirically best performance. For testing the \emph{sample efficiency}, every iteration terminates after 20 steps or upon reaching the goal, and only 30 iterations are allowed for training. For statistical significance, the results are averaged over 100 independent trials.

 \subsubsection{Results}

 Figure (\ref{comp_cr}) shows the performance of SPI, CPP, and CVI, respectively. 
 Recall that SPI used the \emph{exact} coefficient Eq. (\ref{eq:espi_zeta}).
 The black, blue, and red lines indicate their respective cumulative reward ($y$-axis) along the number of iterations ($x$-axis). The shaded area shows $\pm 1$ standard deviation. CVI learned policies that visited danger regions more often and result in delayed convergence compared to CPP. 
Figure (\ref{gridworld_oscillation}) compares the average policy oscillation defined in Eq. (\ref{oscillation_measure}).

\begin{figure}[t]
  \begin{minipage}{.5\linewidth}
  \centering
  \subfloat[]{\label{comp_cr}\includegraphics[width=\linewidth]{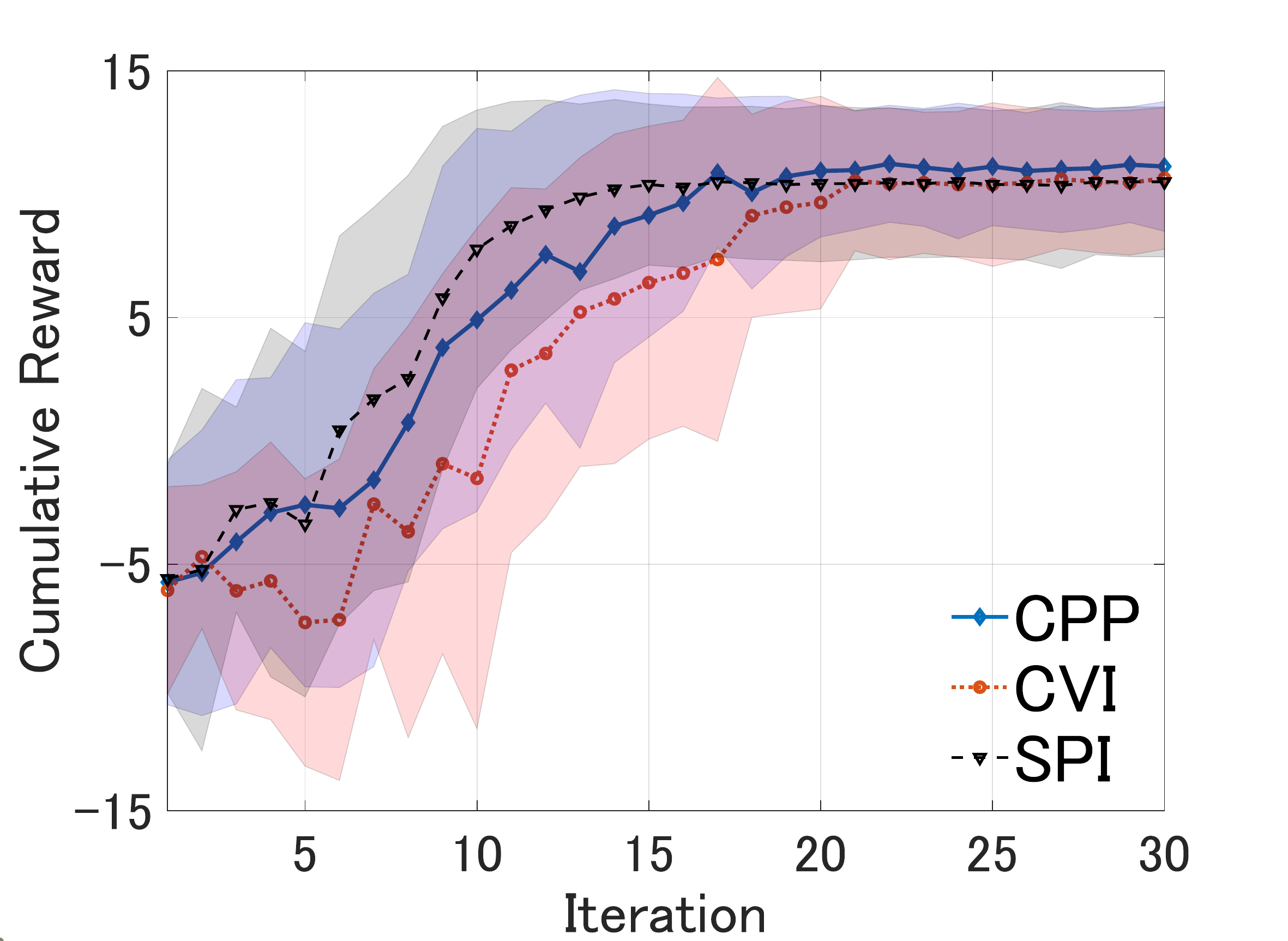}}
  \end{minipage}%
  \begin{minipage}{.5\linewidth}
  \centering
  \subfloat[]{\label{gridworld_oscillation}\includegraphics[width=\linewidth]{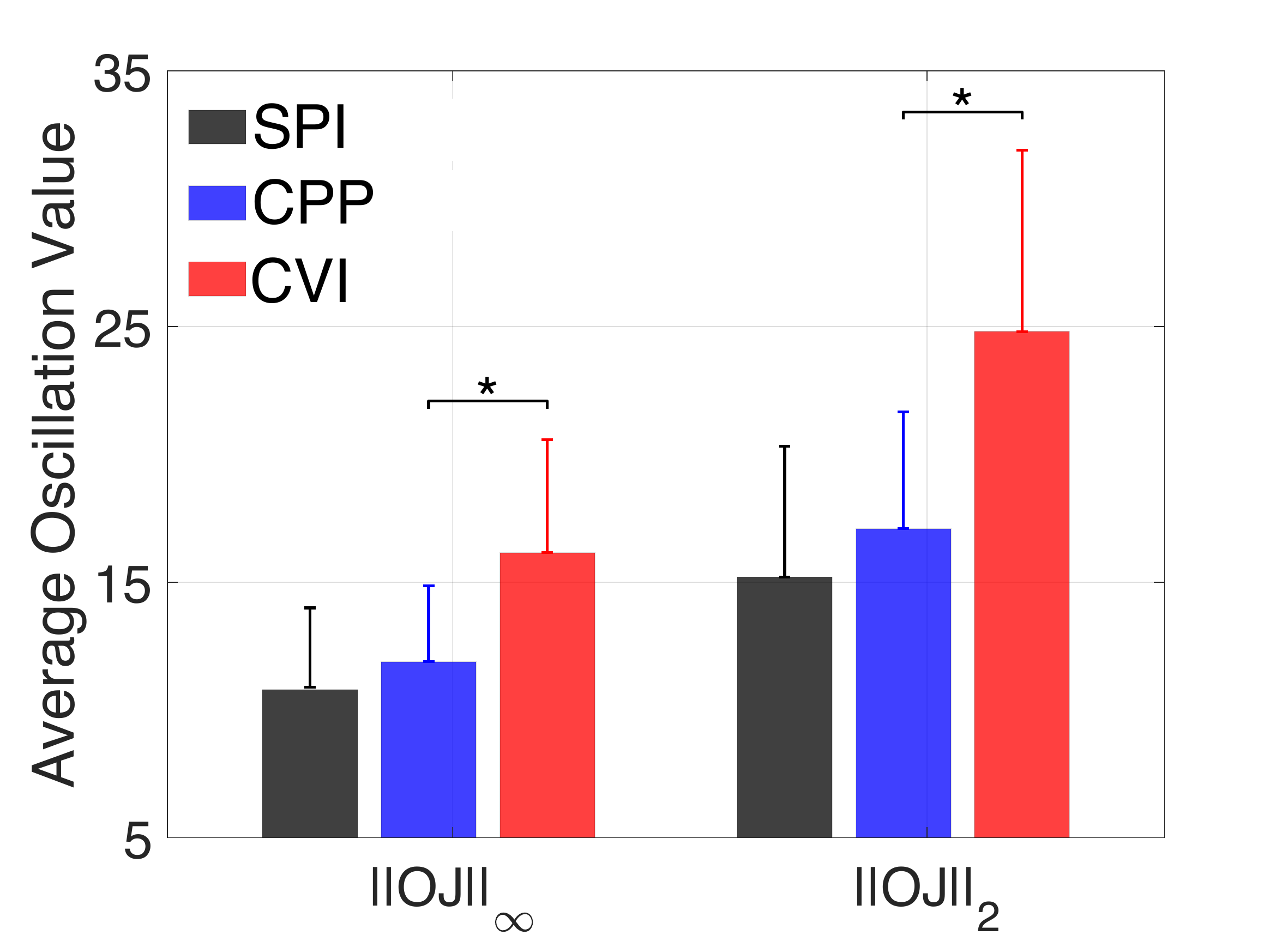}}
  \end{minipage}
  \caption{Comparison between SPI, CPP, and CVI on the safety grid world. The black line shows the mean SPI cumulative reward, the blue line CPP, and the red line CVI in Figure (\ref{comp_cr}), with the shaded area indicating $\pm 1$ standard deviation. 
  Figure (\ref{gridworld_oscillation}) compares the respective policy oscillation value defined in Eq. (\ref{oscillation_measure}).
  }
  \label{gridworld_rci}
\end{figure}

The slightly worse oscillation value of CPP than SPI with $\zeta_{\textsc{E-SPI}}$ is expected as CPP exploited a lower bound that is looser than that of SPI. 
However, as will be shown in the following examples when both linear and nonlinear function approximation are adopted, SPI failed to learn meaningful behaviors due to the inability to accurately estimate the complicated lower bound.

\subsection{Pendulum Swing Up}\label{experiment:pendulum}

Since the state space is continuous in the pendulum swing up, E-SPI can no longer expect to accurately estimate $\delta\Delta A^{\pi_{K+1}}_{\pi_{K}}$, so we employ A-SPI in Eq. (\ref{eq:aspi_zeta}) and compare both E-SPI and A-SPI against Linear CPP Eq. (\ref{eq:linear_cpp}).

\subsubsection{Experimental Setup}\label{sec:pendulum}

A pendulum of length $1.5$ meters has a ball of mass $1$kg at its end starting from the fixed initial state $[0, -\pi]$. The pendulum attempts to reach the goal $[0, \pi]$ and stay there for as long as possible. The state space is two-dimensional $s=[\theta, \dot{\theta} ]$, where $\theta$ denotes the vertical angle and $\dot{\theta}$ the angular velocity. Action is one-dimensional torque $[-2, 0, 2]$ applied to the pendulum. The reward is the negative addition of two quadratic functions quadratic in angle and angular velocity, respectively:
\begin{align*}
    R = - \frac{1}{z}(a\theta^{2} - b\dot{\theta}^{2}),
\end{align*}
where $\frac{1}{z}$ normalizes the rewards and a large $b$ penalizes high angular velocity. We set $z = 10, a=1, b=0.01$.


To demonstrate that the proposed algorithm can ensure monotonic improvement even with a small number of samples, we allow 80 iterations of learning; each iteration comprises 500 steps. For statistical evidence, all figures show results averaged over 100 independent experiments.

\begin{figure}
  \begin{minipage}{.5\linewidth}
  \centering
  \subfloat[]{\label{pendulum_oscillation}\includegraphics[width=\linewidth]{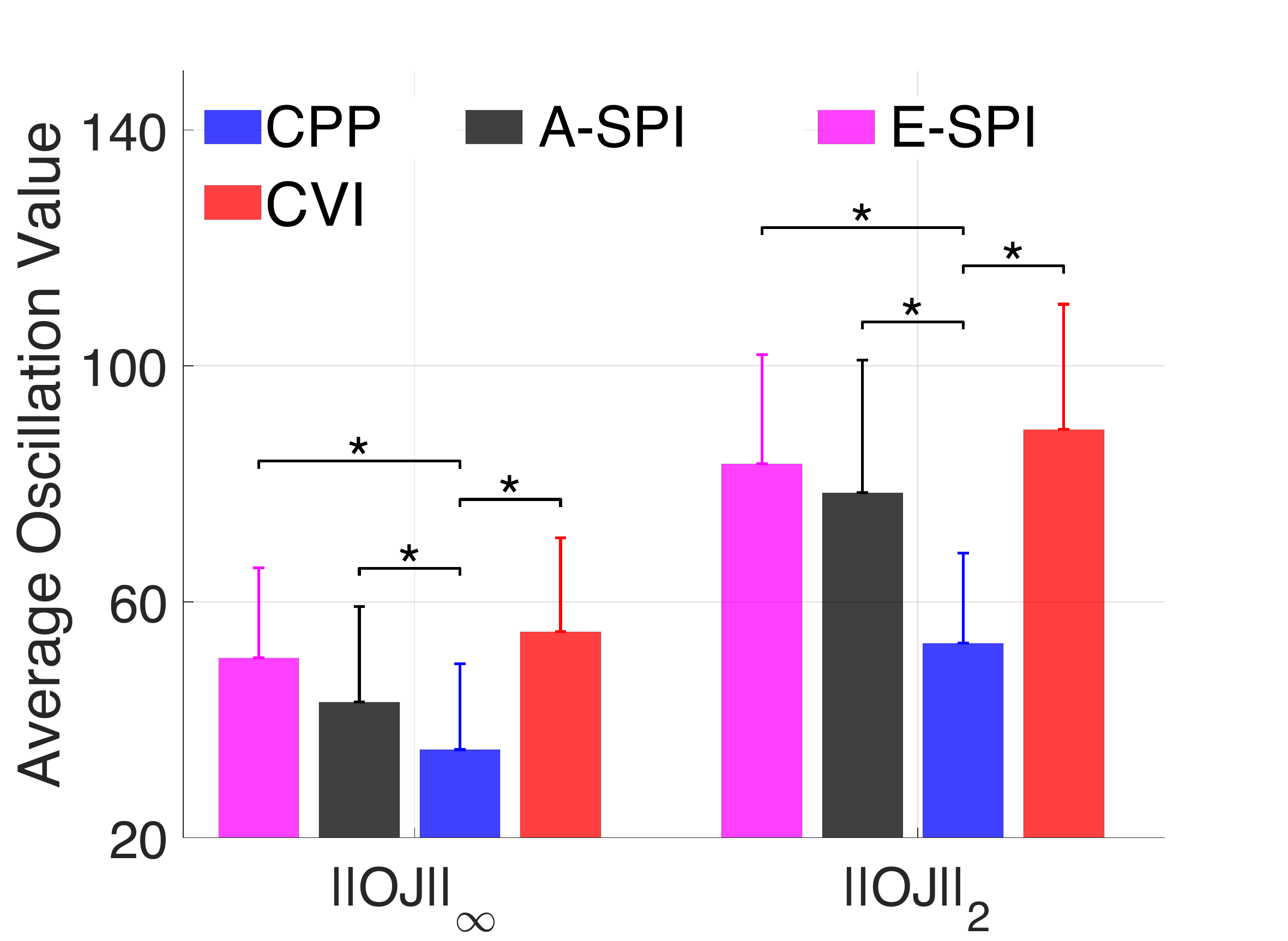}}
  \end{minipage}%
  \begin{minipage}{.55\linewidth}
  \centering
  \subfloat[]{\label{pendulum_reward}\includegraphics[width=\linewidth]{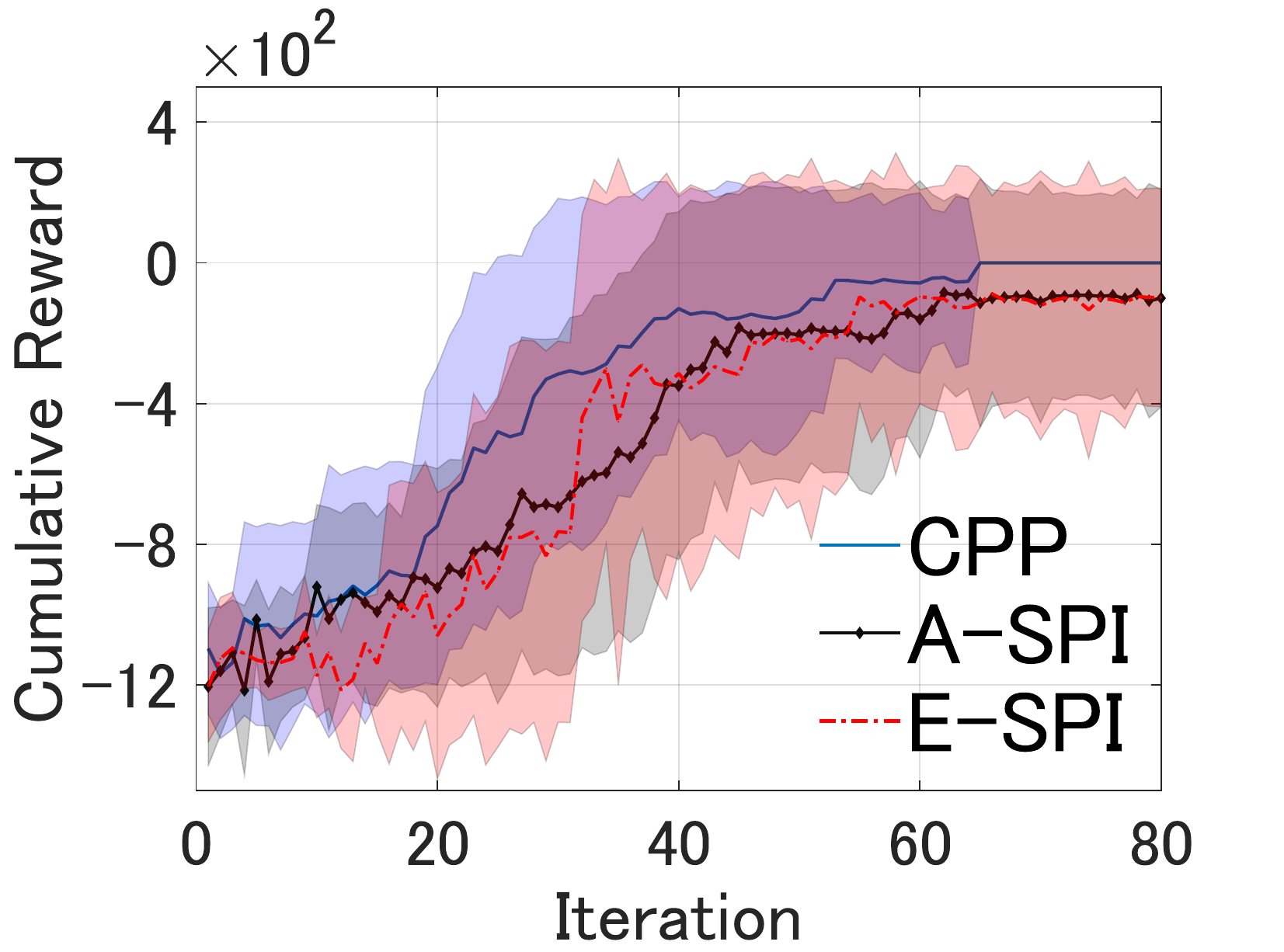}}
  \end{minipage}\par\medskip
  \centering
  \begin{minipage}{.55\linewidth}
    \centering
    \subfloat[]{\label{pendulum_zeta}\includegraphics[width=\linewidth]{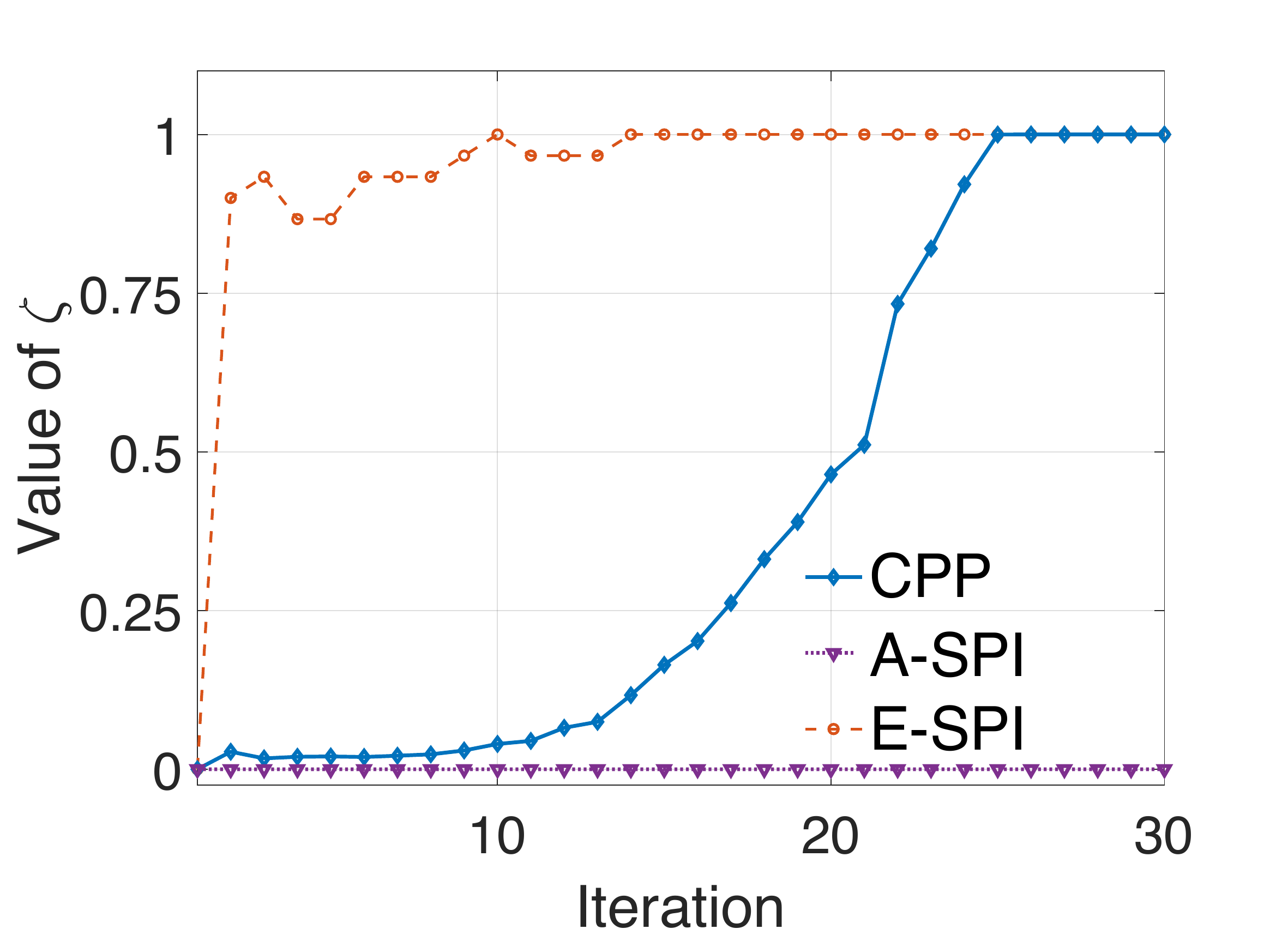}}
    \end{minipage}%
  \caption{Comparison of SPI, CPP, and CVI on the pendulum swing up task.
  Figure (\ref{pendulum_oscillation}) illustrates the policy oscillation value defined in Eq. (\ref{oscillation_measure}).
  Figure (\ref{pendulum_reward}) shows the cumulative reward with $\pm 1$ standard deviation.
  Figure (\ref{pendulum_zeta}) shows the $\zeta$ values. 
  }
  \label{pendulum_results}
\end{figure}

\begin{figure}[h]
    \begin{subfigure}[t]{.99\linewidth}
    \centering
    \includegraphics[width=\linewidth]{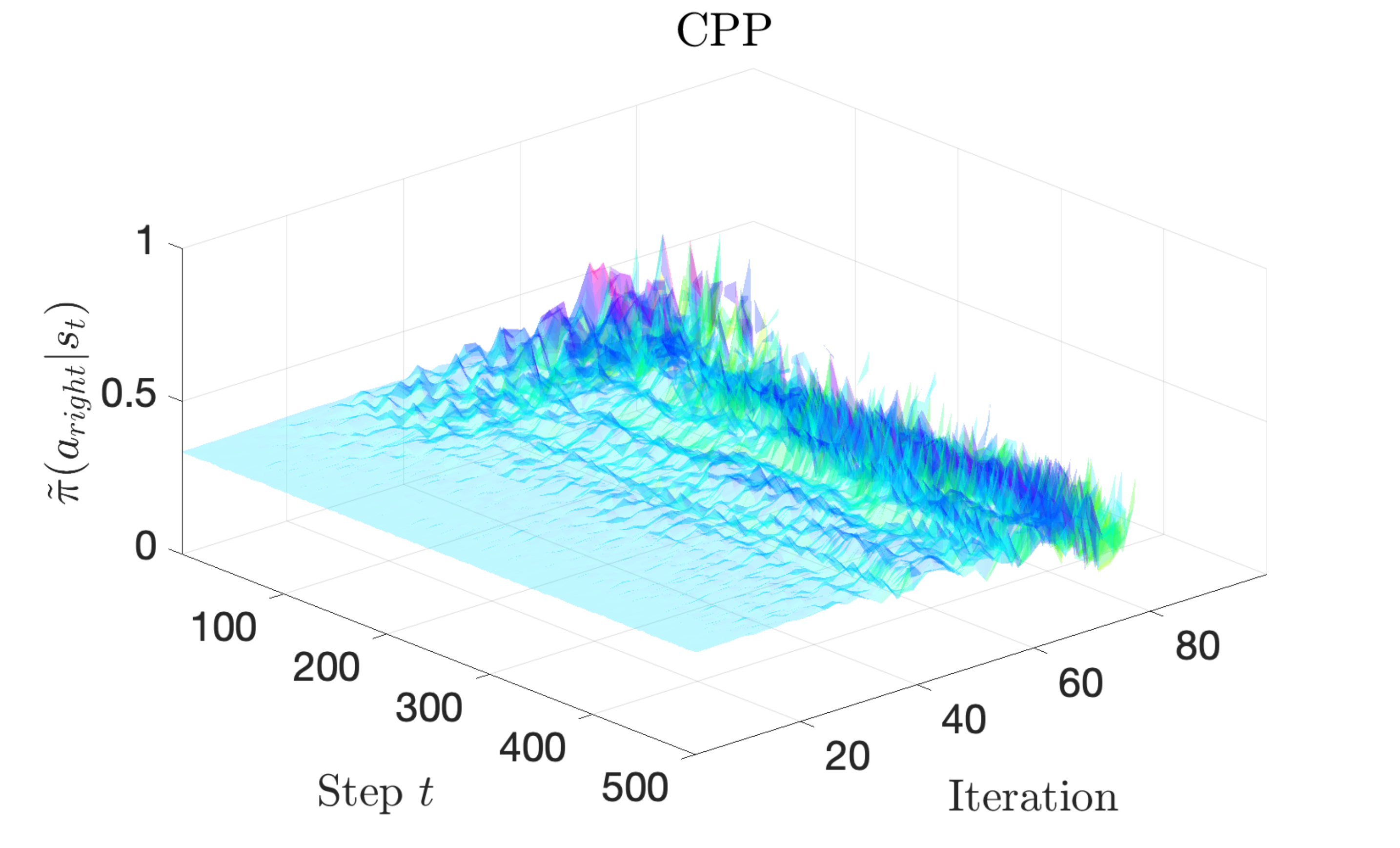}
    \caption{CPP interpolated policy of swinging right $\tilde{\pi}(a_{right}|s_{t})$.}
    \label{fig:cpp_prob}
    \end{subfigure}\\
\vfill
    \begin{subfigure}[t]{.99\linewidth}
      \centering
      \includegraphics[width=\linewidth]{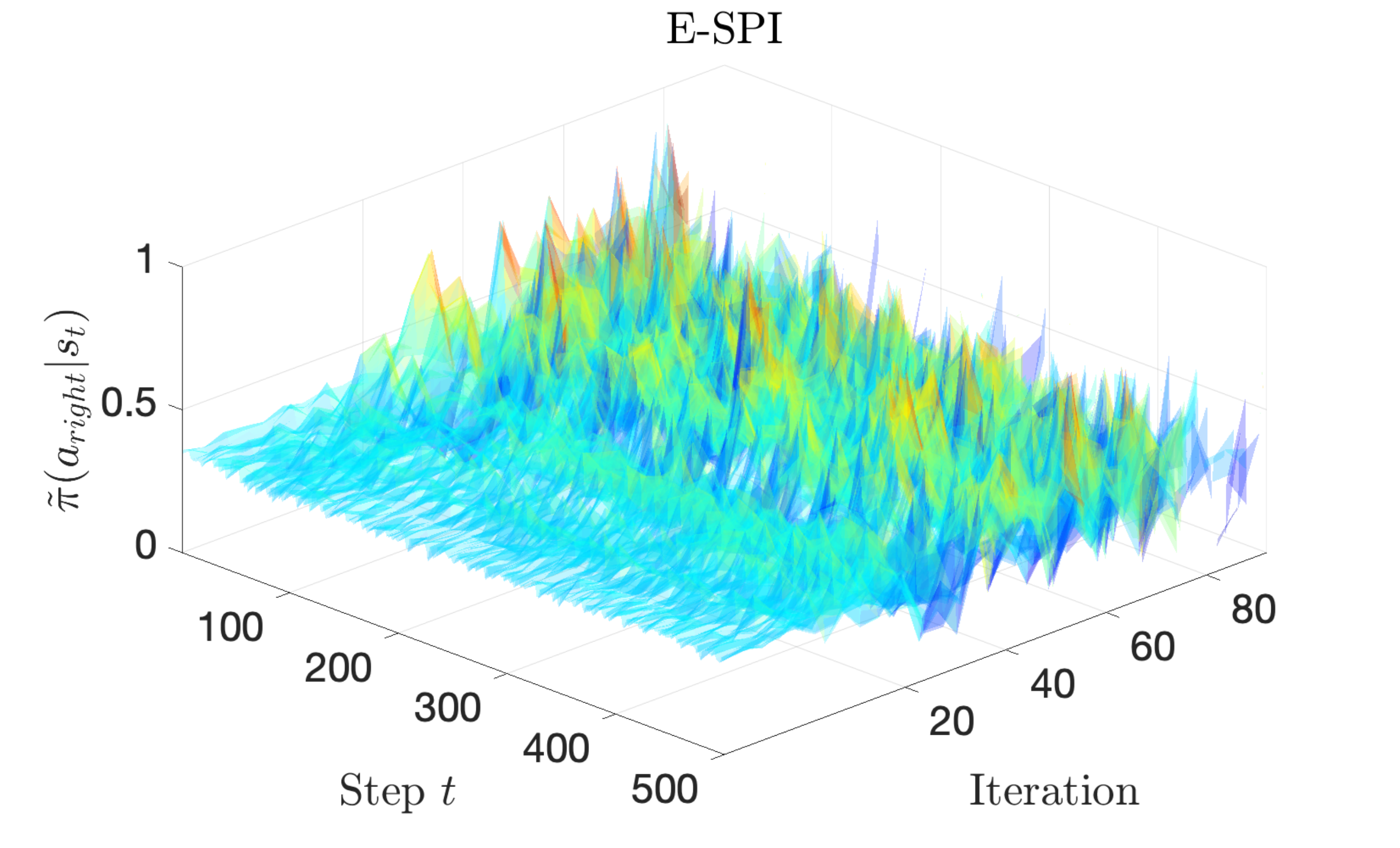}
      \caption{E-SPI interpolated policy of swinging right $\tilde{\pi}(a_{right}|s_{t})$.}
      \label{fig:espi_prob}
      \end{subfigure}
    \caption{CPP and E-SPI interpolated policies of pendulum swinging right $\tilde{\pi}(a_{right}|s_{t})$ ($z$-axis) for timesteps $t=1,\dots,500$ ($x$-axis) from the first to last iteration ($y$-axis).
    E-SPI interpolated policy performed might much more aggressive than the CPP policy caused by the large $\zeta$ values shown in Figure (\ref{pendulum_zeta}).
    }
    \label{pendulum_probability_change}
\end{figure}

\begin{figure*}[t]
  \includegraphics[width=0.95\textwidth]{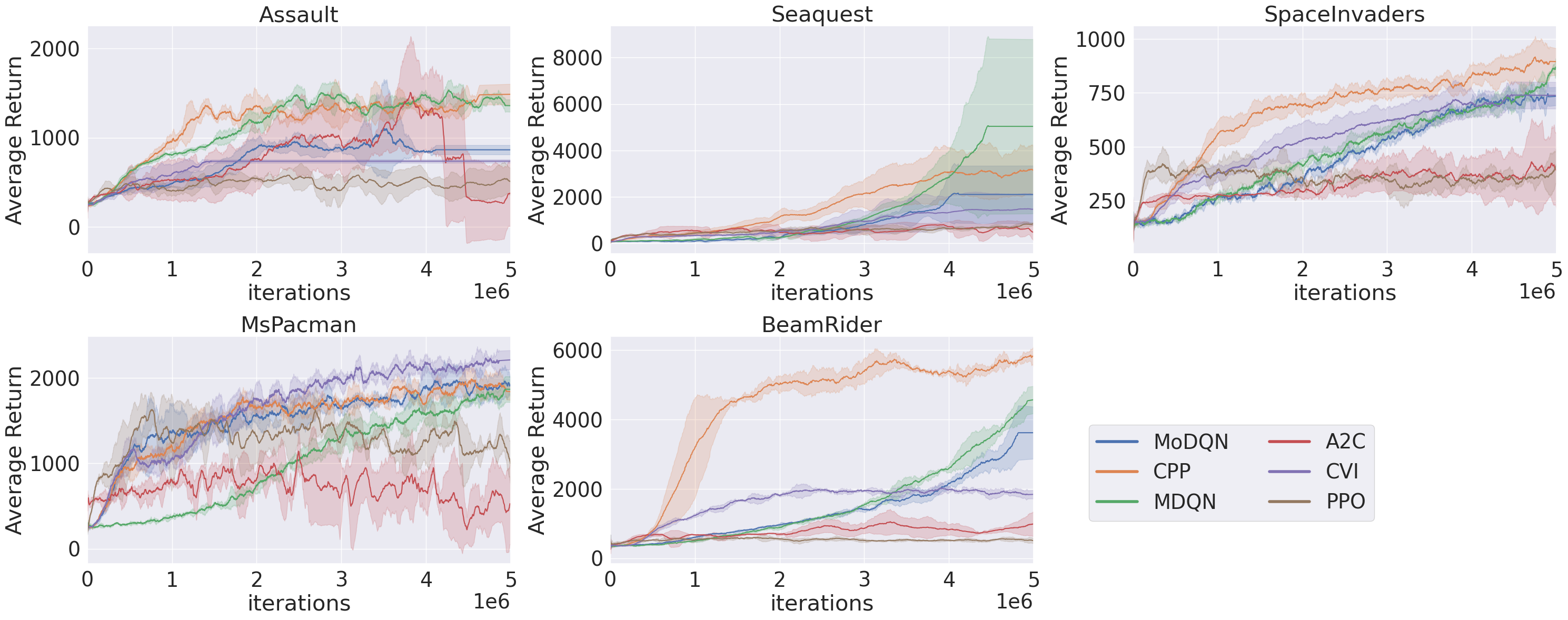}
\caption{Comparison on Atari games averaged over 3 random seeds. 
CPP, MoDQN, MDQN and CVI are implemented as variants of DQN and hence are off-policy. 
PPO and A2C are on-policy.
Correspondence between algorithms and colors is shown in the lower right corner.
Overall, CPP achieved the best balance between final scores, learning speed and oscillation values. 
}
\label{atari}
\end{figure*}

\subsubsection{Results}

We compare CPP with CVI and both E-SPI and A-SPI in Figure \ref{pendulum_results}.
In this simple setup, all algorithms showed similar trend.
But CPP managed to converge to the optimal solution in all seeds, as can be seen from the variance plot.
On the other hand, both SPI versions exhibited lower mean scores and large variance, which indicate that for many seeds they failed to learn the optimal policy.
In Figure (\ref{pendulum_oscillation}), both E-SPI and CVI exhibited wild oscillations, resulting in large average oscillaton values, in which the oscillation criterion is defined as:
\begin{align}
  \begin{split}
     &\forall K, \text{ s.t. }  R_{K+1} - R_{K} < 0, \\
 	& ||\mathcal{O}J||_{\infty} = \max_{K} |R_{K+1} - R_{K}|, \\
     &||\mathcal{O}J||_{2} =  \sqrt{\big(\sum_{K} (R_{K+1} - R_{K})^{2}\big)}, 
  \end{split}
     \label{oscillation_measure}
 \end{align}
 where $R_{K+1}$ refers to the cumulative reward at the $K+1$-th iteration. It is worth noting that the difference $R_{K+1} - R_{K}$ is obtained by $\tilde{\pi}_{K+1}, \tilde{\pi}_{K}$, which is the lower bound of that by $\tilde{\pi}_{K+1}, {\pi}_{K}$. Intuitively, $|| \mathcal{O}J ||_{\infty}$ and $|| \mathcal{O}J ||_{2}$ measure \emph{maximum} and \emph{average} oscillation in cumulative reward. The stars between CPP and CVI represent statistical significance at level $p=0.05$. 



The reason for SPI's drastic behavior can be observed in Figure (\ref{pendulum_zeta}) (truncated to 30 iterations for better view); in E-SPI, insufficient samples led to very large $\zeta$. The aggressive choice of $\zeta$ led to a large oscillation value. On the other hand, A-SPI went to the other extreme of producing vanishingly small $\zeta$ due to the loose choice of $\zeta$ for ensuring improvement of $\Delta J^{\pi'}_{\pi, d^{\pi'}} \geq \frac{(1-\gamma)^{3} (A^{\tilde{\pi}}_{\pi})^{2}}{8\gamma}$, as can be seen from the almost horizontal lines in the same figure; A-SPI had average value $\Delta J^{\tilde{\pi}_{K+1}}_{\pi_{K},d^{\tilde{\pi}_{K+1}}} = 2.39\times 10^{-9}$ and $\zeta = 1.69\times 10^{-6}$. 
CPP converged with much lower oscillation thanks to the smooth growth of the $\zeta$ values; CPP was cautious in the beginning ($\zeta \approx 0$) and gradually became confident in the updates when it was close to the optimal policy ($\zeta \approx 1$).

However, it might happen that $\zeta$ values are large but probability changes are actually small and vice versa.
To certify CPP did not produce such pathological mixture policy and indeed cautiously learned, we plot in Figure \ref{pendulum_probability_change} the interpolated policies of CPP and E-SPI yielding action probability of the pendulum swinging right $\tilde{\pi}(a_{right}|s_{t})$. 
The probability change is plotted in $z$-axis, timesteps $t = 1,\dots, 500$ of all iterations are drawn on $x,y$ axes.
For both cases, $\tilde{\pi}(a_{right}|s) \approx 0.33$ which is uniform at the beginning of learning. 
However, E-SPI policy $\tilde{\pi}(a_{right}|s)$ gradually peaked from around 10th iteration, which led to very aggressive behavior policy.
Such aggressive behavior was consistent with the overly large $\zeta$ values shown in Figure (\ref{pendulum_zeta}).
On the other hand, CPP policy $\tilde{\pi}(a_{right}|s)$ was more tempered and showed a gradual change conforming to its $\zeta$ change.
The probability plots together with $\zeta$ values in Figure (\ref{pendulum_zeta}) indicate that the CPP interpolation was indeed effective in producing non-trivial diverse mixture policies.

{\color{black}
\subsection{Atari Games}\label{experiment:atari}

\subsubsection{Experimental Setup}\label{sec:atari_setup}

We applied the algorithms to a set of challenging Atari games: \texttt{MsPacmann}, \texttt{SpaceInvaders}, \texttt{Beamrider}, \texttt{Assault} and \texttt{Sea\-quest} \cite{bellemare13-arcade-jair} using the adaptive $\zeta$ introduced in Eq. (\ref{eq:cpp_zeta}). 
We compare deep CPP with both on- and off-policy algorithms to demonstrate that CPP is capable of achieving superior balance between learning speed and oscillation values. 

For on-policy algorithms, we include the celebrated proximal policy gradient (PPO) \cite{schulman2017proximal}, a representative trust-region method.
We also compare with Advantage Actor-Critic (A2C) \cite{mnih2016asynchronous} which is a standard on-policy actor-critic algorithm: our intention is to confirm the expensive sample requirement of on-policy algorithms typically render them underperformant when the number of timesteps is not sufficiently large.

For the off-policy algorithms, we decide to include several state-of-the-art DQN variants: Munchausen DQN (MDQN) \cite{vieillard2020munchausen} features the implicit KL regularization brought by the Munchausen log-policy term: it was shown that MDQN was the only non-distributional RL method outperforming distributional ones.
We also include another state-of-the-art variant: Momentum DQN (MoDQN) \cite{Vieillard2020Momentum} that avoids estimating the intractable base policy in KL-regularized RL by constructing momentum. MoDQN has been shown to obtain superior performance on a wide range of Atari games.
Finally, as an ablation study, we are interested in the case $\zeta = 1$, which translates to conservative value iteration (CVI) \cite{kozunoCVI} based on the framework Eq. (\ref{sys_DPPbellman}). 
CVI has not seen deep RL implementation to the best of our knowledge. Hence a performant deep CVI implementation is of independent interest.

All algorithms are implemented using library Stable Baselines 3 \cite{stable-baselines3}, and tuned using the library Optuna \cite{optuna}. Further, all on- and off-policy algorithms share the same network architectures for their group (i.e. MDQN and CPP share the same architecture and PPO and A2C share another same architecture) for fair comparison. The experiments are evaluated over 3 random seeds.  Details are provided in \ref{apdx:atari}.
We expect that on simple tasks PPO and A2C might be stable due to the on-policy nature, but too slow to learn meaningful behaviors.
However, PPO is known to take drastic updates and heavily needs code-level optimization to correct the drasticity \cite{engstrom2020implementation}.
On the other hand, for complicated tasks, too drastic policy updates might be corrupted by noises and errors, leading to divergent learning.
By contrast, CPP should balance between learning speed and oscillation value, leading to gradual but smooth improvement.

\begin{table*}[t!]
  \centering
  \begin{tabular}{|c|c|c|c|c|c|c|}
    \hline
    Criterion & Algorithm &\texttt{Assault} & \texttt{Seaquest}  & \texttt{SpaceInvaders} & \texttt{MsPacman} & \texttt{BeamRider}\\
    \hline
    \multirow{6}{5em}{\,\, $||\mathcal{O}J||_{2}$}& CPP& 151 & 622 & 89 & 249 & 460 \\
     &MDQN& 129 & 2149 & 77 & 202 & 220 \\
     &MoDQN&162& 813 & 91 & 288 & 718 \\
     &CVI& 77  & 449  & 83 & 292 & 220 \\
     &PPO&  74 & 68  & 72 & 280 & 74 \\
     &A2C& 218 & 98  & 48 & 395 & 87\\
    \hline
    \multirow{6}{5em}{\,\, $||\mathcal{O}J||_{\infty}$} &CPP& 59 & 561 & 42 & 26 & 292\\
    &MDQN& 51 & 2141 & 16 & 52 & 149 \\
    &MoDQN&111& 716 & 36 & 124 & 665 \\
    &CVI& 6  & 361 & 51 & 98 & 105 \\
    &PPO& 16 & 9  & 7 & 36 & 33 \\
    &A2C& 52 & 15  & 8 & 249 & 34\\
   \hline
  \end{tabular}
  \caption{The oscillation values of algorithms listed in Sec. \ref{sec:atari_setup} measured in $||\mathcal{O}J||_{2}$ and $||\mathcal{O}J||_{\infty}$ defined by Eq. (\ref{oscillation_measure}). 
    CPP achieved the best balance between final score, learning speed and oscillation values. 
    Note that CPP was implemented to leverage off-policy data.
    Algorithms of small oscillation values, such as PPO, failed to compete with CPP in terms of final scores and convergence speed.
  }
  \label{tab:oscillation_comparison}
\end{table*}

\subsubsection{Results}

\textbf{Final Scores. }
As is visible from Figure \ref{atari}, Deep CPP achieved either the first or second place in terms of final scores on all environments, with the only competitive algorithm being MDQN which is the state-of-the-art DQN variant,  and occasionally CVI which is the case of $\zeta \!=\! 1$.
However, MDQN suffered from numerical stability on the environment \texttt{Seaquest} as can be seen from the flat line at the end of learning.

CVI performed well on the simple environment \texttt{MsPacman}, which can be interpreted as that learning on simple environments is not likely to oscillate, and hence the policy regularization imposed by $\zeta$ is not really necessary, setting $\zeta \!=\! 1$ is the best approach for obtaining high return. 
However, in general it is better to have adjustable update: on the environment \texttt{BeamRider} the benefit of adjusting the degree of updates was significant: CPP learning curve quickly rised at the beginning of learning, showing a significant large gap with all other algorithms.
Further, while CVI occasionally performed well, it suffered also from numerical stability: on the environment \texttt{Assault}, CVI and MoDQN achieved around 1000 final scores but ran into numerical issues as visible from the end of learning.
This problem has been pointed out in \cite{Vieillard2020Momentum}. 

On the other hand, on all environments on-policy algorithms A2C and PPO failed to learn meaningful behaviors. 
On some environment such as \text{Assault} A2C showed divergent learning behavior at around $4 \times 10^{6}$ and PPO did not learn meaningful behavior until the end.
This observation suggests that the sample complexity of on-policy algorithms is high and generally not favorable compared to off-policy algorithms.

\begin{figure}[t]
  \centering
  \includegraphics[width=\linewidth]{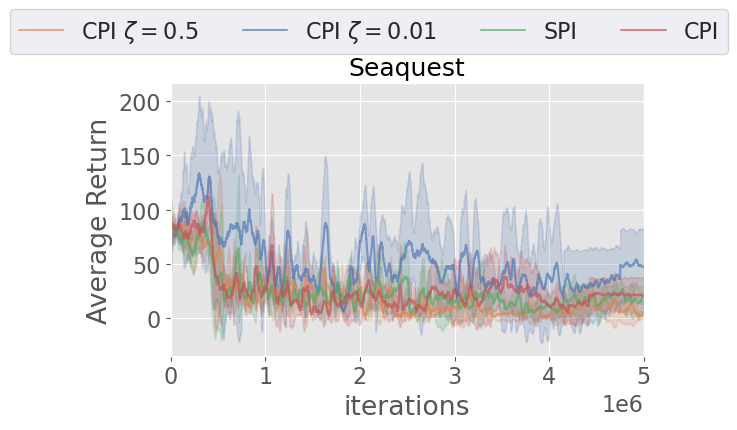}
  \caption{Learning curves of DCPI on \texttt{Seaquest} with four coefficient designs. 
  All designs achieved the final score of 50, while CPP achieved around 3000 in Figure \ref{atari}.
  }
  \label{fig:ablation}
\end{figure}

\textbf{Oscillation. }
The averaged oscillation values of all algorithms are listed in Table \ref{tab:oscillation_comparison}.
While MDQN showed competitive performance against CPP, it exhibited wild oscillation on the difficult environment \texttt{Seaquest} \cite{azar18-noisynet} and finally ran into numerical issue as indicated by the flatline near the end. 
The oscillation value reached to around 2100. Since MDQN is the state-of-the-art regularized value iteration algorithm featuring implicit regularization, this result illustrates that on difficult environments, only reward regularization might not be sufficient to maintain stable learning. 
On the other hand, CPP achieved a balance between stable learning and small oscillation, with oscillation value around $600$, attaining final score slightly lower than MDQN and higher than MoDQN and CVI.

The oscillation values and final scores should be combined together for evaluating how algorithms perform.
CVI, MoDQN sometimes showed similar performance to CPP, but in general the final scores are lower than CPP, with higher oscillation values. 
On the other hand, MDQN showed competitive final scores, but sometimes it exhibited wild oscillation and ran into numerical issues,  implying that on some environments where low oscillation is desired, CPP might be more desirable than MDQN.
On-policy algorithms even showed low oscillation values, but their final scores are considered unacceptable.

\subsection{Ablation Study}\label{sec:ablation}

We are interested in comparing the performance of DCPI with CPP to see the role played by $\zeta_{\textsc{DCPP}}$.
It is also enlightening by inspecting the result of fixing $\zeta$ as a constant value.
In this subsection, we perform ablation study by comparing the the following four designs:
\begin{itemize}
  \item DCPI with fixed $\zeta \!=\! 0.01$: this is to inspect the result of constantly low interpolation coefficient.
  \item DCPI with fixed $\zeta \!=\! 0.5$: this is to examine the performance of equally weighting all policies.
  \item CPI: this uses the coefficient from Eq. (\ref{eq:cpi_zeta}). 
  \item SPI: this uses the DCPI architecture, but we compute $\zeta_{\textsc{A-SPI}}$ by using Eq. (\ref{eq:aspi_zeta}).
\end{itemize}
We examine those four designs on the challenging environment \texttt{Seaquest}.
Other experimental settings are held same with Sec. \ref{experiment:atari}.

As can be seen from Figure \ref{fig:ablation}, all designs showed a similar trend of converging to some sub-optimal policy.
The final scores were around 50, which was significantly lower than CPP in Figure \ref{atari}.
This result is not surprising since for $\zeta=0.01$, almost no update was performed. 
For $\zeta=0.5$, the algorithm weights contribution of all policies equally without caring about their quality.
On this environment, $\zeta_{\text{A-SPI}}$ is vanishingly small similar with that shown in Figure (\ref{pendulum_zeta}).
Lastly, for CPI the number of learning steps is not sufficient for learning meaningful behavior.

\section{Discussion}\label{sec:discussion}

Leveraging the entropy-regularized formulation for monotonic improvement has been recently analyzed in the \emph{policy gradient} literature for tabular MDP \cite{mei20b-globalConvergence,Agarwal20-theoryPolicyGradient,cen2020fastGlobalConvergence}.
In the tabular MDP setting with exact computation, monotonic improvement and fast convergence can be proved. 
However, realistic applications are beyond the scope for their analysis and no scalable implementation has been provided.
On the other hand, value-based methods have readily applicable error propagation analysis \cite{Munos2005,scherrer15-AMPI,Lazaric2016-classificationPolicyIterationAnalysis} for the function approximation setting, but they seldom focus on monotonic improvement guarantees such as $ J^{K+1}-J^{K}  \geq 0 $.
In this paper, we started from the value-based perspective to derive monotonic improvement formulation and provide scalable implementation suitable for learning with deep networks.

We verified that CPP can approximately ensure monotonic improvement in low-dimensional problems and achieved superior tradeoff between learning speed and stabilized learning in high-dimensional Atari games.
This tradeoff is best seen from the value of $\zeta$: 
in the beginning of learning the agent prefers to be cautious, resulting in small $\zeta$ values as can be seen from Figure (\ref{pendulum_zeta}). 
In relatively simple scenarios where exact computation or linear function approximation suffices, $\zeta < 1$ might slow down convergence rate in favor of more stable learning. 
On the other hand, in challenging problems this cautiousness might in turn accelerate learning in the later stages, as can be seen from the CVI curves in Figure \ref{atari} that correspond to drastically setting $\zeta=1$: except in the environment \texttt{MsPacman}, in all other environments CVI performed worse than CPP.
This might be due to that learning with deep networks involve heavy approximation error and noises. 
Smoothly changing of the interpolation coefficient becomes necessary under these errors and noises, which is a core factor of CPP.
We found that CPP was especially useful in challenging tasks where both learning progress and cautiousness are required.
We believe CPP bridges the gap between theory and practice that long exists in the monotonic improvement RL literature: previous algorithms have only been tested on simple environments yet failed to deliver guaranteed stability.

CPP made a step towards practical monotonic improving RL by leveraging entropy-regularized RL. 
However, there is still room for improvement. 
Since the entropy-regularized policies are Boltzmann, generally the policy interpolation step does not yield another Boltzmann by adding two Boltzmann policies. 
{\color{black}
Hence an information projection step should be performed to project the resultant policy back to the Boltzmann class to retrieve Boltzmann properties.
}
While this projection step can be made perfect in the ideal case, in practice there is an unavoidable projection error. 
This error if well controlled, could be damaging and significantly degrade the performance. How to remove this error is an interesting future direction.  

Another sublety of CPP is on the use of Lemma \ref{thm:KL}.
Lemma \ref{thm:KL} states that the maximum KL divergence of a sequence of CVI policies is bounded. 
However, since we performed interpolation on top of CVI policies, it is hence not clear whether this guarantee continues to hold for the interpolated policy, which renders our use of Lemma \ref{thm:KL} heurisitic.
As demonstrated by the experimental results, we found such heuristic worked well for the problems studied. 
We leave the theoretical justification of Lemma \ref{thm:KL} on interpolated policies to future work.

We believe the application of CPP, i.e., the combination of policy interpolation and entropy-regularization to other state-of-the-art methods is feasible at least within the value iteration scenario. 
Indeed, CPP performs two regularization: one in the stochastic policy space and the other in the reward function. 
There are many algorithms share the reward function regularization idea with CPP, which implies the possibility of adding another layer of regularization on top of it. 
On the other hand, distributional RL methods may also benefit from the interpolation since they output distribution of rewards which renders interpolation straightforward. We leave them to future investigation.

Another interesting future direction is to extend CPP to the actor-critic setting that can handle continuous action spaces.
Though both CPI-based and entropy-regularized concepts have been respectively applied in actor-critic algorithms, there has not seen published results showing featuring this combination. 
We expect that the combination could greatly alleviate the policy oscillation phenomenon in complicated continuous action control domain such as Mujoco environments.


\section{Conclusion}\label{sec:conclusion}

In this paper we proposed a novel RL algorithm: cautious policy programming that leveraged a novel entropy regularization aware lower bound for monotonic policy improvement. 
The key ingredients of the CPP is the seminal policy interpolation and entropy-regularized policies.
Based on this combination, we proposed a genre of novel RL algorithms that can effectively trade off learning speed and stability, especially inhibiting the policy oscillation problem that arises frequently in RL applications. 
We demonstrated the effectiveness of CPP against existing state-of-the-art algorithms on simple to challenging environments, in which CPP achieved performance consistent with the theory.

\section{Acknowledgement}

This research is funded by JSPS KAKENHI Grant Number 21H03522 and 21J15633.


\appendix\label{apdx}

\section{Appendix}\label{appendix}

In the first part of the Appendix, we detail the proofs of the theorems and lemmas that appear in our paper. We provide implementation details in the latter half. 

\subsection{Proof of Theorem \ref{thm:main}}\label{thm1}
In order to prove Theorem \ref{thm:main}, we introduce the following two lemmas. 
The first concerns monotonic policy improvement and the second provides a tool for connecting it with the entropy-regularization-aware lower bound.

\subsubsection{Monotonic Policy Improvement Lemma}\label{apdx:lemma2} 
In this section we provide the proof of Lemma \ref{thm:SPI}.
The proof was borrowed from \cite{pirotta13}
but for the ease of reading we rephrase it here.


\textbf{Lemma \ref{thm:SPI}.} \emph{Provided that policy $\pi'$ is generated by partial update Eq. (\ref{mixture_policy}), $\zeta$ is chosen properly, and $A_{\pi, d^{\pi}}^{\tilde{\pi}} \geq 0$, then the following improvement is guaranteed:} 
\begin{align}
  \begin{split}
&\Delta J^{\pi'}_{\pi, d^{\pi'}} \geq \frac{\big((1-\gamma)A_{\pi,d^{\pi}}^{\tilde{\pi}}\big)^{2}}{2\gamma\delta\Delta A^{\tilde{\pi}}_{\pi}}, \\
\text{with } & \zeta = \min{(1, \zeta^{*})},\\
\text{where } &\zeta^{*}=\frac{(1-\gamma)^{2}A^{\tilde{\pi}}_{{\pi, d^{\pi}}}}{\gamma\delta\Delta A^{\tilde{\pi}}_{\pi}},\\
&\delta=\max_{s}{\left|\sum_{a\in\mathcal{A}}\big(\tilde{\pi}(a|s)-\pi(a|s)\big)\right|},\\
&\Delta A^{\tilde{\pi}}_{\pi}=\max_{s, s'}{|A^{\tilde{\pi}}_{\pi}(s)-A^{\tilde{\pi}}_{\pi}(s')}|.
  \end{split}
  \label{J_first_exact_apdx}
\end{align}

\begin{proof}

The proof follows the similar derivation in the classic CPI \cite{Kakade02} and similar results appeared many times in e.g. \cite{pirotta13,Metelli18-configurable}.
We also show that the role of $\zeta$ and $(1-\zeta)$ in Eq. (\ref{mixture_policy}) can be exchanged by solving a similar problem.
To begin, we leverage Theorem 3.5 of \cite{pirotta13} that:
\begin{align}
  \begin{split}
    \Delta J^{\pi'}_{\pi, d^{\pi'}} &\geq A_{\pi,d^{\pi}}^{{\pi'}} - \frac{\gamma\Delta A_{\pi}^{{\pi'}}}{2(1-\gamma)^{2}}\max_{s}{\left|\sum_{a\in\mathcal{A}}\big({\pi'}(a|s)-\pi(a|s)\big)\right|}.
  \end{split}
  \label{pirotta}
\end{align}
Substituting in $\pi' = \zeta \tilde{\pi} + (1-\zeta)\pi$, we have:
\begin{equation}
  \begin{split}
    A^{\pi'}_{\pi,d^{\pi}}&=\sum_{s}{d^{\pi}{(s)}\sum_{a}{\pi'(a|s)A_{\pi}(s,a)}}\\
    &=\sum_{s}{d^{\pi}{(s)}\sum_{a}{\big(\zeta\tilde{\pi}(a|s)+(1-\zeta)\pi(a|s)\big)A_{\pi}(s,a)}}\\
    & =\zeta\sum_{s}{d^{\pi}{(s)}\sum_{a}{\tilde{\pi}(a|s)A_{\pi}(s,a)}}=\zeta A^{\tilde{\pi}}_{\pi,d^{\pi}},
      \end{split}
  \label{SPI_loosen}
\end{equation}
\begin{equation}
    \begin{split}
    \Delta A^{{\pi'}}_{\pi}&=\max_{s, s'}{|A^{{\pi'}}_{\pi}(s)-A^{{\pi'}}_{\pi}(s')}|\\
    &= \max_{s, s'}{|\zeta A^{\tilde{\pi}}_{\pi}(s) -\zeta A^{\tilde{\pi}}_{\pi}(s')}|,\\
    \delta &= \max_{s}{\left|\sum_{a\in\mathcal{A}}\big({\pi'}(a|s)-\pi(a|s)\big)\right|},\\
    &= \max_{s}{\left|\sum_{a\in\mathcal{A}}\big(\zeta\tilde{\pi}(a|s)-\zeta\pi(a|s)\big)\right|}.\\
  \end{split}
  \label{SPI_loosen2}
\end{equation}
Hence, Eq. (\ref{pirotta}) is transformed into:
\begin{align}
  \begin{split}
    \Delta J^{\pi'}_{\pi, d^{\pi'}} &\geq \zeta A_{\pi,d^{\pi}}^{{\tilde{\pi}}} - \frac{\gamma\zeta^{2}\Delta A_{\pi}^{{\tilde{\pi}}}}{2(1-\gamma)^{2}}\max_{s}{\big|\sum_{a\in\mathcal{A}}\big({\tilde{\pi}}(a|s)-\pi(a|s)\big)\big|}.
  \end{split}
  \label{SPI_quadratic}
\end{align}
The right hand side (r.h.s.) is a quadratic function in $\zeta$ and has its maximum at 
\begin{align}
  \begin{split}
\zeta^{*} = \frac{(1-\gamma)^2 A_{\pi,d^{\pi}}^{{\tilde{\pi}}}}{\gamma\Delta A_{\pi}^{{\tilde{\pi}}}\max_{s}{\big|\sum_{a\in\mathcal{A}}\big(\tilde{\pi}(a|s)-\pi(a|s)\big)\big|}}.
  \end{split}
\end{align}
By substituting $\zeta^{*}$ back to Eq. (\ref{SPI_quadratic}), we obtain: 
\begin{align}
  \begin{split}
    \Delta J^{\pi'}_{\pi, d^{\pi'}} \geq \frac{\big((1-\gamma)A_{\pi,d^{\pi}}^{\tilde{\pi}}\big)^{2}}{2\gamma\delta\Delta A^{\tilde{\pi}}_{\pi}}.
  \end{split}
  \label{SPI_maximum}
\end{align}
When $\zeta^{*}>1$, we clip it using $\min(1, \zeta^{*})$. 

Note that, if we exchange the roles of $\zeta$ and $(1-\zeta)$, the coefficients in Eq. (\ref{SPI_loosen}) should be $(1-\zeta)$. Equation (\ref{SPI_quadratic}) would become a quadratic function in $(1-\zeta)$; hence the r.h.s. of Eq. (\ref{SPI_maximum}) would be the maximum of $(1-\zeta^{*})$.
This concludes the proof.

\end{proof}

\textbf{Remark. } 
By noting that $\tilde{\pi}(a|s) - \pi(s,a)$ appears in both $\delta$ and $\Delta A^{\tilde{\pi}}_{\pi}$, we see that the policy improvement $\Delta J^{\pi'}_{\pi, d^{\pi}}$ is governed by the maximum total variation of policies. While one can exploit Lemma \ref{thm:SPI} for a value-based RL algorithm, it can be seen that it could only apply to problems with small-state action spaces. In general, without further assumptions on $\pi', \tilde{\pi}, \pi$, lower-bounding policy improvement is intractable, as maximization $\delta$ and $\Delta A^{\tilde{\pi}}_{\pi}$ in a large state space require exponentially many samples for accurate estimation.

\subsubsection{Entropy-regularization Lemma}\label{apdx:lemma3} 
To optimize the lowerbound in Lemma \ref{thm:SPI}, it is required to know $\delta$ \cite{pirotta13}, which is intractable for large state spaces without further specification on the considered policy class. 

{\color{black}
{
By considering the class of entropy-regularized MDPs, Lem\-ma 2 can be significantly simplified, of which the following lemma plays a crucial role.



\textbf{Lemma \ref{thm:KL}.} \emph{For any policies $\pi_{K}$ and $\pi_{K+1}$ generated by taking the maximizer of Eq. (\ref{sys_DPPbellman}), the following bound holds for their maximum total variation}:
\begin{align}
  \begin{split}
    &\max_{s}{D_{TV}\left(\pi_{K+1}(\cdot|s) \,||\, \pi_{K}(\cdot|s) \right) } \leq \\
    & \qquad \qquad \qquad \qquad \quad \min \left\{  \sqrt{1 - e^{- 4 B_{K} - 2 C_{K}}}, \sqrt{8 B_{K} +  4 C_{K}} \right\} , \\
    & \qquad \text{where } B_{K}=\frac{1-\gamma^{K}}{1-\gamma}\epsilon\beta , \,\,\,\, C_{K} = \beta r_{max} \sum_{k=0}^{K-1}{\alpha^{k}\gamma^{K-k-1}},
  \end{split}
  \label{CVI_kl_apdx1}
\end{align}
\emph{$K$ denotes the current iteration index and $0\leq k\leq K-1$ is the loop index. 
$\epsilon$ is the uniform upper bound of error.} 

\begin{proof}
  By the Fenchel conjugacy of the Shannon entropy and KL divergence \cite{cvx-opt-boyd}, it is clear that the maximizing policies for the regularized MDP are Boltzmann softmax \cite{geist19-regularized} as shown in Section \ref{sec:policyIter}.
  The relationship between Boltzmann softmax policies has recently been actively investigated \cite{azar2012dynamic,asadi17a}.
 We leverage the very recent result \cite[Propsition 3]{kozunoCVI}, which states that:
  \begin{align}
    \begin{split}
      &\max_{s}{D_{KL}\left(\pi_{K+1}(\cdot|s) \,||\, \pi_{K}(\cdot|s) \right) } \leq 4 B_{K} + 2 C_{K},\\
      \text{where } &B_{K}=\frac{1-\gamma^{K}}{1-\gamma}\epsilon\beta , \,\, C_{K} = \beta r_{max} \sum_{k=0}^{K-1}{\alpha^{k}\gamma^{K-k-1}},
    \end{split}
    \label{CVI_kl_apdx2}
  \end{align}
  where $\epsilon$ is the uniform upper bound of errors.

While Pinsker's inequality $D_{TV}(p||q) \leq \sqrt{2 D_{KL}(p||q)}$, where $p,q$ are distributions  can be used to directly exploit Eq. (\ref{CVI_kl_apdx2}), there is a gap between the total variation and KL divergence since $D_{TV} \leq 1$ and $D_{KL}$ is potentially unbounded.
Leveraging Pinsker's inequality on Eq. (\ref{CVI_kl_apdx2}) and then on Eqs. (\ref{SPI_loosen},\ref{SPI_loosen2}) will result in large errors when $D_{KL} \geq \frac{\sqrt{2}}{2}$.

To tackle this problem, we introduce the following bound due to \cite{Bretagnolle-betterTVKL} that has more benign behavior\footnotemark\footnotetext{Eq. (\ref{eq:betterKL} appears in other places in different forms such as in \cite[Eq. (4)]{Sason2016-fDiverInequalities}). It is worth mentioning they are the same in essence and differ only in notations.}:
\begin{align}
  D_{TV}(p || q) \leq \sqrt{ 1 - e^{ - D_{KL}(p || q)} }.
  \label{eq:betterKL}
\end{align}
A similar bound appears also in \cite{tsybakov-nonparametric} but is a slightly looser.
More relevant inequalities of such kind can be found in \cite{Sason2016-fDiverInequalities}.
Both \cite{Bretagnolle-betterTVKL} and \cite{tsybakov-nonparametric} feature the component $e^{-D_{KL}(p||q)}$ that ensures the total variation bound is well-defined: the upperbound $\sqrt{ 1 - e^{ - D_{KL}(p || q)} }$ is guaranteed to be no large than 1. Hence we can combine Eq. (\ref{eq:betterKL}) with Eq. (\ref{CVI_kl_apdx2}) by taking the maximization on both sides, yielding the following relationship:
\begin{align}
  \begin{split}
   \max_{s} D_{TV}(\pi_{K+1}(\cdot | s) || \pi_{K} (\cdot | s)) & \leq \sqrt{ 1 - e^{ - \max_{s}D_{KL}(\pi_{K+1}(\cdot | s) || \pi_{K} (\cdot | s) } } \\
  & \leq \sqrt{1 - e^{ - 4 B_{K} - 2 C_{K}} }.
\end{split}
\label{eq:tv_exp}
\end{align}
Now by applying Pinsker's inequality on Eq. (\ref{CVI_kl_apdx2}), we have the following relationship:
\begin{align}
  \max_{s}{D_{TV}\left(\pi_{K+1}(\cdot|s) \,||\, \pi_{K}(\cdot|s) \right) } \leq \sqrt{ 8 B_{K} + 4 C_{K}},
  \label{eq:tv_bc}
\end{align}
taking the minimum of Eqs. (\ref{eq:tv_exp}, \ref{eq:tv_bc}) yields the promised result.

\end{proof}

}}

Now back to Eq. (\ref{CVI_kl_apdx2}), since the reward is bounded in $[-1, 1]$, $r_{max}$ can be conveniently dropped. 
Also, note that for simplicity we assume there is no update error, i.e., $B_{K}=0$. 
However, it can be straightforwardly extended to cases where errors present by simply choosing an upper-bound $\epsilon$ for errors. 
It is worth noting that in deep RL setting the magnitude of $\epsilon$ might be non-trivial and has to be considered in parameter tuning.
Intuitively, Lemma \ref{thm:KL} ensures that an updated entropy-regularized policy will not deviate much from the previous policy.

\subsubsection{Proof of Theorem \ref{thm:main}}\label{apdx:thm4} 
Now, given Lemma \ref{thm:SPI} and Lemma \ref{thm:KL}, we are ready to prove Theorem \ref{thm:main}. 
We first restate it for ease of reading.

\textbf{Theorem \ref{thm:main}.} \emph{Provided that partial update Eq. (\ref{mixture_cvi}) is adopted,  $A^{{\pi_{K+1}}}_{\pi_{K},d^{\pi_{K}}} \geq 0$, and $\zeta$ is chosen properly, then any maximizer policy of Eq. (\ref{sys_DPPbellman}) guarantees the following improvement that depends only on $\alpha, \beta, \gamma \text{ and } A^{{\pi_{K+1}}}_{\pi_{K},d^{\pi_{K}}}$ after any policy update:}

\begin{align*}
  \Delta J^{\tilde{\pi}_{K+1}}_{\pi_{K},d^{\tilde{\pi}_{K+1}}}  &\geq \frac{\big(1-\gamma)^{3}(A_{\pi_{K},d^{\pi_{K}}}^{{\pi_{K+1}}})^{2}}{4\gamma} \max \left\{\frac{1}{1-e^{-2 C_{K}}} \,\, , \,\, \frac{1}{4 C_{K} } \right\},\\
  \text{with } \zeta &= \min{(1, \zeta^{*})}, \quad C_{K} = \beta\sum_{k=0}^{K-1}{\alpha^{k}\gamma^{K-k-1}}, \\
\text{where } \zeta^{*} &= \frac{(1-\gamma)^{3}A^{{\pi_{K+1}}}_{{\pi_{K},d^{\pi_{K}}}}}{2 \gamma } \max\left\{  \frac{1}{1-e^{-2 C_{K}}}, \frac{1}{4 C_{K}} \right\}.
\end{align*}

\begin{proof}
  
The proof follows similarly to the proof of Lemma 2 and hence \cite{pirotta13}.
We prove Theorem \ref{thm:main} by noticing the following inequalities hold for $\delta$ and $\Delta A^{\tilde{\pi}}_{\pi}$ of Eq. (\ref{J_first_exact}), respectively:
\begin{align}
  \begin{split}
    &\Delta A^{\tilde{\pi}}_{\pi} = \max_{s, s'}{|A^{\tilde{\pi}}_{\pi}(s)-A^{\tilde{\pi}}_{\pi}(s')}|\\
    & \leq 2\max_{s}{|A^{\tilde{\pi}}_{\pi}(s)|}=2\max_{s}{\big|\sum_{a}{\tilde{\pi}(a|s) \big(Q_{\pi}(s,a)} - V_{\pi}(s)\big)\big|}\\
    & = 2\max_{s}{\big|\sum_{a}{\big(\tilde{\pi}(a|s)Q_{\pi}(s,a)-\pi(a|s)Q_{\pi}(s,a)\big)}\big|} \\
    & \leq 2 \max_{s}{\sum_{a}{\big|\big(\tilde{\pi}(a|s)-\pi(a|s)\big)Q_{\pi}(s,a)\big|}}\\
    &\leq  2\big|\big| Q_{\pi} \big|\big|_{\infty}\max_{s}{\sum_{a}{\big|\tilde{\pi}(a|s)-\pi(a|s)\big|}} \\
     & \leq 2\sqrt{2}V_{max} \max_{s}{\sqrt{D_{KL}\big(\tilde{\pi}(\cdot|s)||\pi(\cdot|s)\big)}},
  \end{split}
  \label{J_exact_improved_apdx}
\end{align}
 where $V_{max}:=\frac{1}{1-\gamma}r_{max}$ is the maximum possible value function.
 Since we assume reward is upper bounded by $1$, $V_{max}=\frac{1}{1-\gamma}$.
The second inequality makes use of the triangle inequality:
  \begin{align}
    \delta \leq \max_{s}\sum_{a\in\mathcal{A}}\big|\big(\tilde{\pi}(a|s)-\pi(a|s)\big)\big|,
    \label{triangle}
  \end{align}
and the third inequality makes use of Hölder's inequality $\frac{1}{p}+\frac{1}{q}=1$, with $p$ set to $1$ and $q$ set to $\infty$. The last inequality is because of Pinsker's inequality:
\begin{eqnarray}
\begin{aligned}
  &\max_{s}\sum_{a\in\mathcal{A}}\big|\tilde{\pi}(a|s)-\pi(a|s)\big|\leq \max_{s}{\sqrt{2D_{{KL}}(\tilde{\pi}(\cdot|s)||\pi(\cdot|s))}},
  \label{pinsker}
\end{aligned}
\end{eqnarray}
and the fact that $||Q_{\pi}||_{\infty}\leq V_{max}=\frac{1}{1-\gamma}$.

Following \cite{pirotta13}, by incorporating Eqs. (\ref{triangle}, \ref{pinsker}) and Eqs. (\ref{eq:tv_exp}, \ref{eq:tv_bc}) into $\Delta J^{\pi'}_{\pi, d^{\pi'}} \geq \frac{\big((1-\gamma)A_{\pi,d^{\pi}}^{\tilde{\pi}}\big)^{2}}{2\gamma\delta\Delta A^{\tilde{\pi}}_{\pi}}, $ in Eq. (\ref{J_first_exact_apdx}) we have:

\begin{align}
  \begin{split}
    \Delta J^{\pi'}_{\pi, d^{\pi'}} &\geq \frac{\left((1-\gamma) A^{\tilde{\pi}}_{\pi, d^{\pi}} \right)^{2}}{2\gamma{\delta\Delta A^{\tilde{\pi}}_{\pi}}} \\
    &\geq \frac{\left((1-\gamma) A^{\tilde{\pi}}_{\pi, d^{\pi}} \right)^{2}}{2\gamma} \frac{1}{\underbrace{ \max_{s}D_{TV}}_{\delta, \,\,\, \text{Eq.(A.15)}}   } \frac{1}{\underbrace{ 2 V_{max} \max_{s} D_{TV}}_{\Delta A^{\tilde{\pi}}_{\pi}, \,\,\, \text{Eq.(A.13)} } } \\
    &= \frac{(1-\gamma)^3 (A^{\tilde{\pi}}_{\pi, d^{\pi}})^{2} }{ 2\gamma} \frac{1}{2 \max_{s}D_{TV}^{2}} \\
    & \stackrel{\text{(A.9)}}{\geq} \frac{(1-\gamma)^3 (A^{\tilde{\pi}}_{\pi, d^{\pi}})^{2} }{ 4\gamma} \frac{1}{{4 C_{K}} }, \,\,\, \\
    \text{ or } \,\,\,  &\stackrel{\text{(A.11)}}{\geq} \frac{(1-\gamma)^3 (A^{\tilde{\pi}}_{\pi, d^{\pi}})^{2} }{ 4\gamma} \frac{1}{1 - e^{-2C_K} }, \\
  \end{split}
  \label{proof_leq}
\end{align}
by taking the maximum of the two possible outcomes, the result becomes:
\begin{align*}
    \Delta J^{\pi'}_{\pi, d^{\pi'}} &\geq {\color{black}{\frac{(1-\gamma)^{3}}{4\gamma} }} \cdot (A^{\tilde{\pi}}_{\pi, d^{{\pi}}})^2 \cdot \max \left\{ \frac{1}{1 - e^{-2C_{K}}} ,   \frac{1}{4 C_{K}}\right\}.
\end{align*}
The way of choosing $\zeta$ is same as Eq. (\ref{SPI_maximum}) solving the equation that is negative quadratic in $\zeta$. 

\end{proof}

\subsection{Implementation Details}\label{apdx:atari}

In Algorithm \ref{alg:deepCPP} we followed \cite{Vieillard-2020DCPI} for computing the stationary weighted advantage function that empirically shows good performance. 
It should be noted that accurately estimating stationary distribution is still nontrivial \cite{wen2020-batchStationryEstimation} and we leave the improvement to CPP in this regard to our future work. 

Deep CPP, MDQN, MoDQN and CVI in the experimental section share the same network architecture and hyperparameters as specified by the following table:
\begin{center}
  \begin{tabular}{ | c | c | } 
  \hline
  \textbf{Hyperparameters} &  \textbf{Values} \\ 
  \hline
  Number of convolutional layers &  3 \\ 
  \hline
  Convolutional layer channels & (32, 64, 64) \\ 
  \hline
  Convolutional layer kernel size & (8, 4, 3)\\
  \hline
  Convolutional layer stride & (4, 2, 1)\\
  \hline
  Number of fully connected layers &  1, with 512 hidden units \\ 
  \hline
  Batch size & 64\\
  \hline
  Replay buffer size & $10^{6}$ \\
  \hline
  Discount rate & 0.99 \\
  \hline
  Steps per update & 4\\
  \hline
  Learning rate & $1 \times 10^{-4}$\\
  \hline
  Optimizer & Adam \cite{Adam}\\
  \hline
  Loss & Mean squared error \\
  \hline
  $T$ the total number of steps & $5 \times 10^6$\\
  \hline
  $F$ the interaction period & $4$\\
  \hline
  $C$ the update period & $8000$\\
\hline
  \end{tabular}
  \end{center}

  By comparing our results on Deep CPP and Deep CPI \cite{Vieillard-2020DCPI} we see there is difference on the horizon. We ran all algorithms for $5 \times 10^6$ steps while Deep CPI was ran for $5 \times 10^7$ steps. However, we can still make a comparison by the scores up to $5 \times 10^6$ steps. By comparing on the environments that appeared in both papers we have:
  \begin{center}
      \begin{tabular}{ c|c|c } 
      \hline
        Environment      & DCPP & DCPI \\
      \hline
      MsPacman & 2000 & 2200 \\ 
      \hline
      SpaceInvaders & 800 & 800 \\
      \hline
      Seaquest & 3000 & 2000 \\
      \hline
      \end{tabular}
  \end{center} 
Hence we see on relatively simple environments like \texttt{MsPacman} and \texttt{SpaceInvaders} DCPP and DCPI performed similarly. On the other hand, on the challenging environment Seaquest \cite{azar18-noisynet}, DCPP achieved around $30\%$ higher scores at the end of $5 \times 10^6$ environment steps.

We also report the tuned hyperparameters unique to each algorithm in Figure 3 using Optuna \cite{optuna}:
\begin{center}
  \begin{tabular}{ c | c }
    \hline
    Algorithm & Parameters \\
    \hline 
    \multirow{2}*{CPP} &  \multirow{2}*{\shortstack{ Entropy $\tau$: 0.0124 \\ KL regularization $\sigma$: 0.001}} \\
    &  \\
    \hline
    \multirow{2}*{MDQN} &  \multirow{2}*{\shortstack{ Entropy $\tau$: 0.03 \\ Munchausen term $\sigma$: 0.9}} \\
    &  \\
    \hline
    \multirow{2}*{MoDQN} &  \multirow{2}*{Same with \cite{Vieillard2020Momentum}} \\
    &  \\
    \hline
    \multirow{2}*{CVI} &  \multirow{2}*{\shortstack{ Gap coefficient $\alpha$: 0.00024 \\ Temperature $\beta$: 0.000225}} \\
    &  \\
    \hline
  \end{tabular}
\end{center}
The hyperparameters were obtained by running on the environment \texttt{SpaceInvaders} for 300 Optuna trials \cite{optuna}. Each trial consists of $10^5$ steps and the resultant 300 sets of parameters were ranked.
For the on-policy algorithms, PPO and A2C are built-in with Stable Baselines 3 library \cite{stable-baselines3} and the parameters were already fine-tuned. We evaluated them without changing their default hyperparameters.

\bibliography{../../library}

\end{document}